\definecolor{myred}{HTML}{8B0000}
\definecolor{myblue}{HTML}{000080}
\newtheorem{theorem}{Theorem}
\newtheorem{lemma}[theorem]{Lemma}
\newtheorem{corollary}{Corollary}
\newtheorem{definition}{Definition}
\newcommand{\nR}{\mathbb{R}}
\newcommand*\rel@kern[1]{\kern#1\dimexpr\macc@kerna}
\newcommand*\widebar[1]{%
  \begingroup
  \def\mathaccent##1##2{%
    \rel@kern{0.8}%
    \overline{\rel@kern{-0.8}\macc@nucleus\rel@kern{0.2}}%
    \rel@kern{-0.2}%
  }%
  \macc@depth\@ne
  \let\math@bgroup\@empty \let\math@egroup\macc@set@skewchar
  \mathsurround\z@ \frozen@everymath{\mathgroup\macc@group\relax}%
  \macc@set@skewchar\relax
  \let\mathaccentV\macc@nested@a
  \macc@nested@a\relax111{#1}%
  \endgroup
}
\DeclareMathOperator*{\pers}{p}
\definecolor{darkgreen}{rgb}{0.0588,0.4941,0.0706}
\begin{document}
\title{A Stable Multi-Scale Kernel for Topological Machine Learning}

\author{Jan Reininghaus, Stefan Huber\\
IST Austria
\and
Ulrich Bauer\\
IST Austria, 
TU M\"unchen\\
\and
Roland Kwitt\\
University of Salzburg, Austria\\
}

\maketitle
\thispagestyle{empty}


\begin{abstract}
Topological data analysis offers a rich source of valuable information to
study vision problems. Yet, so far we lack a theoretically sound connection to 
popular kernel-based learning techniques, such as kernel SVMs or kernel PCA. 
In this work, we establish such a connection by designing a  
multi-scale kernel for persistence diagrams, a stable summary 
representation of topological features in data. We show that this kernel
is positive definite and prove its stability with respect to the 
1-Wasserstein distance. Experiments on two benchmark datasets for 
3D shape classification/retrieval and texture recognition show 
considerable performance gains of the proposed method compared 
to an alternative approach that is based on the recently introduced 
persistence landscapes.
\end{abstract}


\section{Introduction}
\label{section:introduction}

\begin{figure*}
\includegraphics[width=0.98\textwidth]{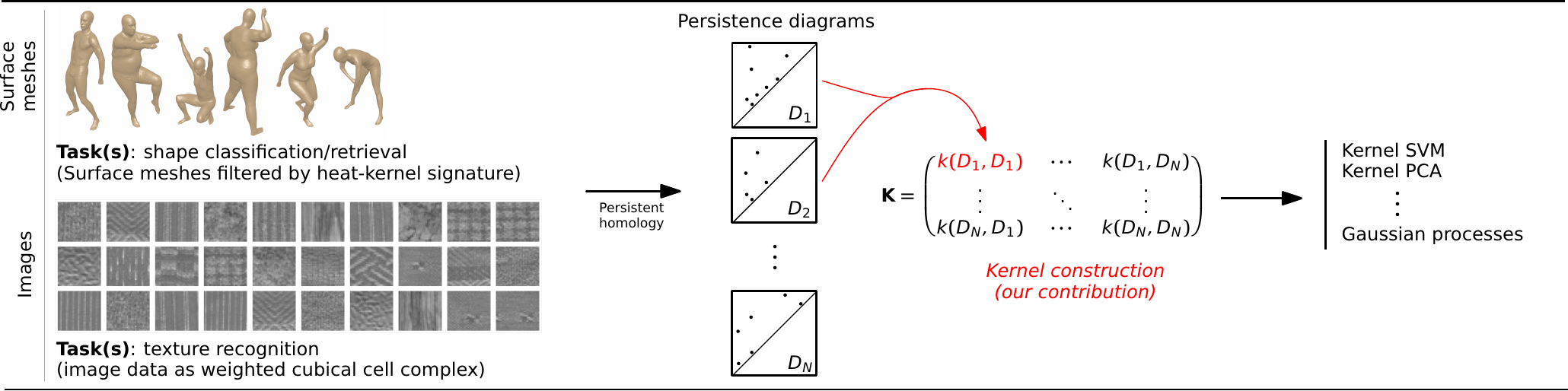}
\caption{\label{fig:motivation}
Visual data (\eg, functions on surface meshes, textures, etc.) is analyzed using 
persistent homology \cite{Edelsbrunner2010Computational}. 
Roughly speaking, persistent homology captures the birth/death times 
of topological features (\eg, connected components or holes) in the form of 
\textit{persistence diagrams}. 
Our contribution is to define a \textit{kernel for 
persistence diagrams} to enable a theoretically sound use these summary
representations in the framework of kernel-based learning techniques, popular in 
the computer vision community.}
\end{figure*}

In many computer vision problems, data (\eg, images, meshes, point 
clouds, etc.) is piped through complex processing  chains in order 
to extract information that can 
be used to address high-level inference tasks, such as recognition, 
detection or segmentation. The extracted 
information might be in the form of low-level appearance descriptors, 
\eg, SIFT \cite{Lowe04a}, or of higher-level nature, \eg, activations at specific layers of deep 
convolutional networks \cite{Krizhevsky12a}. In recognition problems, 
for instance, it is then customary to feed the consolidated data to a discriminant 
classifier such as the popular support vector machine (SVM), a kernel-based learning 
technique.

While there has been substantial progress on extracting and
encoding discriminative information, only recently have people 
started looking into the \textit{topological structure} of the 
data as an additional source of information. With the 
emergence of \textit{topological data analysis (TDA)} 
\cite{Carlsson09a}, computational tools for efficiently 
identifying topological structure have become 
readily available. Since then, several authors have
demonstrated that TDA can capture characteristics of the 
data that other methods often fail to provide, \cf~\cite{Skraba10a, Li14a}.

Along these lines, studying persistent homology \cite{Edelsbrunner2010Computational}
is a particularly popular method for TDA, since it captures
the birth and death times of topological features, \eg, connected components, 
holes, etc., at multiple scales. This information is summarized
by the \textit{persistence diagram}, a multiset of points in the plane. The key feature of persistent homology is its stability: small changes in the input data lead to small changes in the Wasserstein distance of the associated persistence diagrams~\cite{Cohen2010Lipschitz}. Considering the discrete nature of topological information, the existence of such a well-behaved summary is perhaps surprising.

Note that persistence diagrams together with the Wasserstein distance only form a metric space. Thus it is not possible to directly employ persistent homology 
in the large class of machine learning techniques that require a Hilbert space structure, like SVM or PCA. This obstacle is typically circumvented by defining a kernel function on the domain containing the data, which in turn defines a Hilbert space structure implicitly. While the Wasserstein distance itself does not naturally lead to a valid kernel (see Appendix \ref{section:definiteness}), we show that it is possible to define a kernel for persistence diagrams that is stable \wrt the 1\hbox{-}Wasserstein distance. This is the main contribution of this paper.

\textbf{Contribution.} We propose a (positive definite) 
multi-scale kernel for persistence diagrams (see Fig.~\ref{fig:motivation}). This kernel is defined via an $L_2$-valued feature map, based on ideas from scale space theory~\cite{Iijima62}. We show that our feature map is Lipschitz continuous with respect to the 1-Wasserstein distance, thereby maintaining the stability property of persistent homology.
The scale parameter of our kernel controls its robustness to noise and can be tuned to the data. We investigate, in detail, the theoretical properties of the 
kernel, and demonstrate its applicability on shape classification/retrieval 
and texture recognition benchmarks.

\section{Related work}
\label{section:relatedwork}

Methods that leverage topological information for computer 
vision or medical imaging methods can roughly be grouped into
two categories. In the first category, we identify previous work that 
\textit{directly} utilizes topological information to address a 
specific problem, such as topology-guided segmentation. 
In the second category, we identify approaches that \textit{indirectly}
use topological information. That is, information about topological
features is used as input to some machine-learning algorithm. 

As a representative of the first category, Skraba \etal 
\cite{Skraba10a} adapt the idea of persistence-based
clustering \cite{Chazal11a} in a segmentation method for 
surface meshes of 3D shapes, driven by the topological
information in the persistence diagram.
Gao \etal \cite{Gao13a} use persistence information to restore
so called \textit{handles}, \ie, topological cycles, in 
already existing segmentations of the left ventricle, extracted from
computed tomography images. In a different segmentation setup, 
Chen \etal \cite{Chen11a}
propose to directly incorporate topological constraints 
into random-field based segmentation models.

In the second category of approaches, Chung \etal \cite{Chung09a} 
and Pachauri \etal \cite{Pachauri11a} investigate the problem 
of analyzing cortical thickness measurements on 3D surface 
meshes of the human cortex in order to study developmental and neurological 
disorders. In contrast to \cite{Skraba10a}, persistence
information is not used directly, but rather as a \textit{descriptor} 
that is fed to a discriminant classifier in order to distinguish 
between normal control patients and patients with Alzheimer's disease/autism. Yet, the step of training the classifier with topological
information is typically done in a rather adhoc manner. 
In \cite{Pachauri11a} for instance, the persistence diagram is 
first rasterized on a regular grid, then a kernel-density estimate is 
computed, and eventually the vectorized discrete probability density function is used as a 
feature vector to train a SVM using standard kernels for $\mathbb{R}^n$. It is however unclear 
how the resulting kernel-induced distance behaves with respect to existing 
metrics (\eg, bottleneck or Wasserstein distance) and how properties 
such as stability are affected. An approach that directly uses well-established distances between persistence diagrams for recognition 
was recently proposed by Li \etal \cite{Li14a}. Besides bottleneck and Wasserstein distance, the authors employ persistence landscapes \cite{Bubenik13a} and the corresponding distance in their experiments.
Their results expose the complementary nature of persistence 
information when combined with traditional bag-of-feature approaches.
While our empirical study in Sec.~\ref{subsection:empirical_results} is inspired by \cite{Li14a}, we primarily focus on the development of 
the kernel; the combination with other methods is straightforward.
  
In order to enable the use of persistence information in 
machine learning setups, Adcock \etal \cite{Adcock13} propose to 
compare persistence diagrams using a feature vector motivated by 
algebraic geometry and invariant theory. The features are defined 
using algebraic functions of the birth and death values in the 
persistence diagram.

From a conceptual point of view, Bubenik's concept of \textit{persistence landscapes} \cite{Bubenik13a} is probably the closest to ours, being another kind of feature map for persistence diagrams. While persistence landscapes were not explicitly designed for use in machine learning algorithms, we will draw the connection to our work in Sec.~\ref{subsection:landscape_comparison} and show that they in fact admit the definition of a valid positive definite kernel. Moreover, both
persistence landscapes as well as our approach represent computationally 
attractive alternatives to the bottleneck or Wasserstein distance, which
both require the solution of a matching problem. 

\section{Background}
\label{section:background}

First, we review some fundamental notions and results from persistent homology that will be relevant for our work. 
\paragraph{Persistence diagrams.}
\emph{Persistence diagrams} are a concise description of the topological changes occuring in a growing sequence of shapes, called \emph{filtration}.
In particular, during the growth of a shape, holes of different dimension (\ie, gaps between components, tunnels, voids, etc.) may appear and disappear. Intuitively, a $k$-dimensional hole, born at time~$b$ and filled at time~$d$, gives rise to a point~$(b,d)$ in the $k$\textsuperscript{th} persistence diagram.
A persistence diagram is thus a multiset of points in $\mathbb{R}^2$.
Formally, the persistence diagram is defined using a standard concept from algebraic topology called \emph{homology}; see \cite{Edelsbrunner2010Computational} for details.

Note that not every hole has to disappear in a filtration. 
Such holes give rise to \emph{essential} features and are 
naturally represented by points of the form $(b,\infty)$ 
in the diagram. Essential features therefore capture the topology of
the final shape in the filtration. 
In the present work, we do not consider these
features as part of the persistence diagram.
Moreover, all persistence diagrams will be assumed to be finite,
as is usually the case for persistence diagrams coming from data.


\begin{figure}[t!]
\centering
\includegraphics[width=0.75\columnwidth]{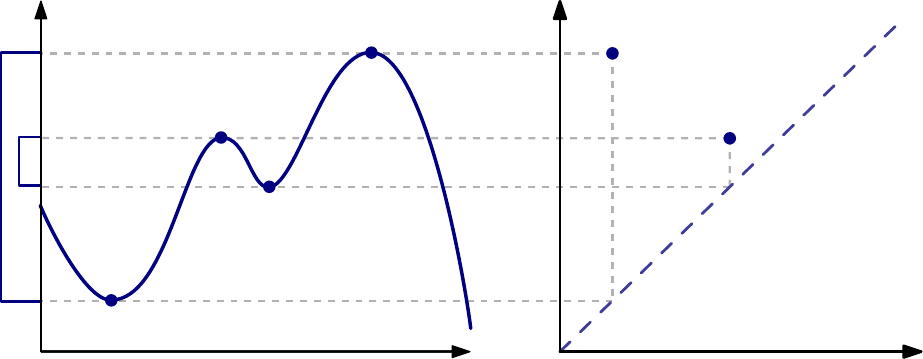}
\caption{A function $\mathbb{R}\to\mathbb{R}$ (left) and its 0\textsuperscript{th} persistence diagram (right). Local minima create a connected component in the corresponding sublevel set, while local maxima merge connected components. The pairing of birth and death is shown in the persistence diagram.\label{fig:persistence-dgm}}
\end{figure}

\vspace{-0.2cm}
\paragraph{Filtrations from functions.}
A standard way of obtaining a filtration is to consider the \emph{sublevel sets} $f^{-1}(-\infty,t]$ of a function $f\colon\Omega\to\mathbb R$ defined on some domain~$\Omega$, for $t\in\mathbb R$. It is easy to see that the sublevel sets indeed form a filtration parametrized by $t$. We denote the resulting persistence diagram by $D_f$; see Fig.~\ref{fig:persistence-dgm} for an illustration.

As an example, consider a grayscale image, where $\Omega$ is the rectangular domain of the image and $f$ is the grayscale value at any point of the domain (\ie, at a particular pixel). A sublevel set would thus consist of all pixels of $\Omega$ with value up to a certain threshold~$t$.
Another example would be a piecewise linear function on a triangular mesh $\Omega$, such as the
popular heat kernel signature \cite{Sun09a}.
Yet another commonly used filtration arises from point clouds~$P$ embedded in $\mathbb R^n$, by considering the distance function $d_P(x)=\min_{p\in P}\|x-p\|$ on $\Omega=\mathbb R^n$. The sublevel sets of this function are unions of balls around $P$. Computationally, they are usually replaced by equivalent constructions called \emph{alpha shapes}.

\vspace{-0.1cm}
\paragraph{Stability.}
A crucial aspect of the persistence diagram $D_f$ of a function $f$ is its stability with respect to 
perturbations of $f$. In fact, only stability guarantees that one can infer information about the 
function $f$ from its persistence diagram $D_f$ in the presence of noise.

Formally, we consider $f \mapsto D_f$ as a map of metric spaces and define \emph{stability} as Lipschitz continuity of this map. This requires choices of metrics both on the set of functions and the set of persistence diagrams. For the functions, the $L_\infty$ metric is commonly used.

There is a natural metric associated to persistence diagrams, called the \emph{bottleneck distance.}
Loosely speaking, the distance of two diagrams is expressed by minimizing the largest distance of any two corresponding points, over all bijections between the two diagrams.
Formally, let $F$ and $G$ be two persistence diagrams, each augmented by adding each point $(t,t)$ on the diagonal with countably infinite multiplicity. The \emph {bottleneck distance} is 
\begin{equation}
d_B(F,G)=\inf_\gamma\sup_{x\in F}\|x-\gamma(x)\|_\infty ,
\label{eqn:bottleneck_distance}
\end{equation}
where $\gamma$ ranges over all bijections from the individual points of\/~$F$ to the individual 
points of\/~$G$. 
As shown by Cohen-Steiner \etal.~\cite{Steiner07a}, persistence diagrams are stable with respect to the bottleneck distance.

%

The bottleneck distance embeds into a more general class of distances, called \emph{Wasserstein distances}. For any positive real number $p$, the \emph{$p$-Wasserstein distance} is 
\begin{equation}
d_{W,p}(F, G)=\left(\inf_\gamma\sum_{x\in F}\|x-\gamma(x)\|_\infty^p\right)^{1\over p},
\label{eqn:wasserstein_distance}
\end{equation}
where again $\gamma$ ranges over all bijections from the individual elements of
$F$ to the individual elements of $G$.  Note that taking the limit $p\to\infty$
yields the bottleneck distance, and we therefore define $d_{W,\infty} = d_B$.
We have the following result bounding the $p$-Wasserstein distance in terms of
the $L_\infty$ distance:

\begin{theorem}[Cohen-Steiner et al.~\cite{Cohen2010Lipschitz}]
Assume that $X$ is a compact triangulable metric space such that for every 1-Lipschitz function $f$ on $X$ and for $k\geq 1$, the \emph{degree $k$ total persistence} $\sum_{(b,d)\in D_f}(d-b)^k$ is bounded above by some constant $C$.
Let $f,g$ be two $L$-Lipschitz piecewise linear functions on $X$. Then for all $p\geq k$,
\begin{equation}
d_{W,p}(D_f,D_g) \leq (LC)^{1\over p} \|f-g\|_\infty^{1-\frac k p}.
\label{eqn:ch_wasserstein_l_inf_bound}
\end{equation}
\end{theorem}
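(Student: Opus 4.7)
The plan is to reduce the theorem to two ingredients that are already on the table: the bottleneck stability of persistence diagrams, and the total persistence hypothesis. First I would invoke the bottleneck stability theorem of Cohen-Steiner \etal~\cite{Steiner07a} applied to the $L$-Lipschitz functions $f,g$. This yields a bijection $\gamma$ between the diagonally augmented diagrams of $f$ and $g$ satisfying $\|x-\gamma(x)\|_\infty \leq \|f-g\|_\infty$ for every $x$. Using this particular $\gamma$ as a candidate in the infimum defining $d_{W,p}$ gives $d_{W,p}(D_f,D_g)^p \leq \sum_x \|x-\gamma(x)\|_\infty^p$, which reduces the problem to estimating that sum.

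The key observation is then that the uniform pointwise bottleneck bound can be traded against a $k$-th power, exploiting the assumption $p \geq k$: for each matched pair,
$\|x-\gamma(x)\|_\infty^p = \|x-\gamma(x)\|_\infty^{p-k}\cdot \|x-\gamma(x)\|_\infty^k \leq \|f-g\|_\infty^{p-k}\cdot \|x-\gamma(x)\|_\infty^k$. Summing yields $\sum_x \|x-\gamma(x)\|_\infty^p \leq \|f-g\|_\infty^{p-k}\sum_x\|x-\gamma(x)\|_\infty^k$. It then remains to bound the degree-$k$ matching cost $\sum_x \|x-\gamma(x)\|_\infty^k$ by the total persistence. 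For this I would use that the $L_\infty$ distance from $(b,d)$ to the diagonal equals $(d-b)/2$, apply the triangle inequality via the closest diagonal point, and invoke the convexity estimate $(a+b)^k \leq 2^{k-1}(a^k+b^k)$. This controls $\sum_x \|x-\gamma(x)\|_\infty^k$ by a constant multiple of $\sum_{(b,d)\in D_f}(d-b)^k + \sum_{(b,d)\in D_g}(d-b)^k$. Rescaling to 1-Lipschitz functions by working with $f/L$ and $g/L$ (whose diagrams are $D_f$ and $D_g$ scaled by $1/L$) lets the hypothesis bound each of these total persistences in terms of $L$ and $C$. Taking $p$-th roots then gives the asserted H\"older-type inequality.

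The main obstacle is matching the exact constant $(LC)^{1/p}$ appearing in \eqref{eqn:ch_wasserstein_l_inf_bound}: the elementary argument sketched above produces a bound of the form $(L^k C)^{1/p}\|f-g\|_\infty^{1-k/p}$ together with some explicit combinatorial factors, which differs by powers of $L$ from the stated form. Recovering the exact constant likely requires a finer interpolation, for example by following the persistence diagrams along the straight-line homotopy $(1-t)f+tg$, bounding the derivative of the transport cost, and integrating in $t$, in the spirit of vineyards. The qualitative structure of the bound — the $1/p$ power of total persistence traded against the $1-k/p$ power of $\|f-g\|_\infty$ — is however already dictated by the simple trade-off above, so the scheme above captures the correct exponents even if a sharper argument is needed for the constant.
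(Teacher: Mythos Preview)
The paper does not prove this theorem. It is stated in Sec.~\ref{section:background} as a background result, attributed to Cohen-Steiner \etal~\cite{Cohen2010Lipschitz}, and is immediately followed only by the remark that it yields H\"older rather than Lipschitz continuity. There is therefore nothing in the paper to compare your argument against.

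That said, your sketch is essentially the argument of~\cite{Cohen2010Lipschitz}: take the bottleneck-optimal matching~$\gamma$, split $\|x-\gamma(x)\|_\infty^p$ as $\|x-\gamma(x)\|_\infty^{p-k}\cdot\|x-\gamma(x)\|_\infty^{k}$, bound the first factor uniformly by $\|f-g\|_\infty^{p-k}$ via bottleneck stability, and control the remaining degree-$k$ sum by the total persistences of the two diagrams. Your concern about the constant is well placed: rescaling an $L$-Lipschitz function to a $1$-Lipschitz one scales the degree-$k$ total persistence by $L^{-k}$, so the hypothesis yields $\sum_{(b,d)\in D_f}(d-b)^k\le L^k C$, and the resulting bound is $(C\,L^k)^{1/p}\,\|f-g\|_\infty^{1-k/p}$. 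This is exactly the form of the Wasserstein Stability Theorem in~\cite{Cohen2010Lipschitz}; the constant $(LC)^{1/p}$ as printed here appears to be a transcription slip rather than something attainable by a sharper argument, so you need not look for a vineyard-type refinement to recover it.
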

We note that, strictly speaking, this is not a stability result in the sense of Lipschitz continuity, since it only establishes H\"older continuity. Moreover, it only gives a constant upper bound for the Wasserstein distance when $p=1$. 


\paragraph{Kernels.}

Given a set $\mathcal{X}$, a function $k \colon \mathcal{X} \times \mathcal{X}
\to \mathbb{R}$ is a \emph{kernel} if there exists a Hilbert space
$\mathcal{H}$, called \emph{feature space}, and a map $\Phi \colon \mathcal{X}
\to \mathcal{H}$, called \emph{feature map}, such that $k(x,y) = \langle
\Phi(x), \Phi(y) \rangle_{\mathcal{H}}$ for all $x, y \in \mathcal{X}$.
Equivalently, $k$ is a kernel if it is symmetric and positive
definite~\cite{Scholkopf01}.
Kernels allow to apply machine learning algorithms operating on a Hilbert space
to be applied to more general settings, such as strings, graphs, or, in our case,
persistence diagrams.

A kernel induces a pseudometric $d_k(x,y) = (k(x,x) + k(y,y) - 2\,
k(x,y))^{\nicefrac{1}{2}}$ on $\mathcal{X}$, which is the distance $\|\Phi(x) -
\Phi(y)\|_\mathcal{H}$ in the feature space.
We call the kernel $k$ \emph{stable} \wrt a metric $d$ on $\mathcal{X}$ if
there is a constant $C > 0$ such that $d_k(x,y) \le C \, d(x,y)$ for all $x, y
\in \mathcal{X}$. Note that this is equivalent to Lipschitz continuity of the
feature map.

The stability of a kernel is particularly useful for classification 
problems: assume that there exists a separating hyperplane $H$ for two classes 
of data points with margin $m$. If the data points are perturbed by some 
$\epsilon < m/2$, then $H$ still separates the two classes with a margin 
$m - 2\epsilon$.

\vspace{-0.1cm}
\section{The persistence scale-space kernel}
\label{section:kernel}

We propose a stable \textit{multi-scale} kernel $k_\sigma$ for the set of
persistence diagrams $\mathcal{D}$. This kernel will be defined via a feature
map $\Phi_\sigma: \mathcal{D} \rightarrow L_2(\Omega)$, with $\Omega \subset
\nR^2$ denoting the closed half plane above the diagonal.

To motivate the definition of $\Phi_\sigma$, we point out that the set of persistence
diagrams, \ie, multisets of points in $\mathbb{R}^2$, does not possess a
Hilbert space structure per se. However, a persistence diagram $D$ can be uniquely 
represented as a sum of Dirac delta distributions%
\footnote{A Dirac delta distribution is a functional that evaluates a given
    smooth function at a point.}%
, one for each
point in $D$. Since Dirac deltas are functionals in the Hilbert space
$H^{-2}(\mathbb{R}^2)$ \cite[Chapter 7]{Iorio01}, we can embed the set of persistence diagrams
into a Hilbert space by adopting this point of view.

Unfortunately, the induced metric on $\mathcal{D}$ does
\textit{not} take into account the distance of the points to the diagonal, and
therefore cannot be robust against perturbations of the diagrams. 
Motivated by scale-space
theory~\cite{Iijima62}, we address this issue by using the sum of Dirac deltas
as an initial condition for a heat diffusion problem with a Dirichlet boundary
condition on the diagonal.
The solution of this partial differential equation is an $L_2(\Omega)$ function
for any chosen scale parameter $\sigma>0$. In the following paragraphs, we will
\begin{compactenum}[1)]
\item define the persistence scale space kernel $k_\sigma$, \item derive a simple
formula for evaluating $k_\sigma$, and \item prove stability of $k_\sigma$
\wrt the $1$-Wasserstein distance.
\end{compactenum}


\begin{definition}
Let $\Omega = \{ x = (x_1, x_2) \in \mathbb{R}^2\colon x_2 \geq x_1 \}$ denote
the space above the diagonal, and let $\delta_p$ denote a Dirac delta centered at
the point $p$.  For a given persistence diagram $D$, we now consider the
solution $u\colon \Omega \times \mathbb{R}_{\geq 0} \rightarrow \mathbb{R},
(x,t) \mapsto u(x,t)$ of the partial differential equation\footnote{Since the
    initial condition~\eqref{eq:initial_condition} is not an $L_2(\Omega)$
    function, this equation is to be understood in the sense of distributions. For a
    rigorous treatment of existence and uniqueness of the solution, see
    \cite[Chapter
    7]{Iorio01}.}
\begin{align}
    \Delta_x u &= \partial_t u &&\text{in $\Omega \times \mathbb{R}_{> 0}$}, \\
    u &= 0 &&\text{on $\partial\Omega \times \mathbb{R}_{\geq 0}$}, \\
    u &= \sum_{p \in D} \delta_p &&\text{on $\Omega \times \{0\}$} \label{eq:initial_condition}.
\end{align}
The feature map $\Phi_\sigma \colon \mathcal{D} \to L_2(\Omega)$ at scale
$\sigma > 0$ of a persistence diagram $D$ is now defined as $\Phi_\sigma(D) =
\left.u\right|_{t=\sigma}$.  This map yields the persistence scale 
space kernel $k_\sigma$ on $\mathcal{D}$ as
\begin{equation}
    k_\sigma(F,G) = \langle \Phi_\sigma(F),\Phi_\sigma(G) \rangle_{L_2(\Omega)}.
	\label{eq:kernel_definition}
\end{equation}


\end{definition}

Note that $\Phi_\sigma(D)=0$ for some $\sigma>0$ implies that $u=0$  on $\Omega \times \{0\}$,
which means that $D$ has to be the empty diagram. From linearity of the solution operator it now follows that 
$\Phi_\sigma$ is an injective map.

The solution of the partial differential equation can be obtained by extending 
the domain from  $\Omega$ to $\mathbb{R}^2$ and replacing \eqref{eq:initial_condition} with 
\begin{align}
u &= \sum_{p \in D} \delta_p - \delta_{\overline{p}} &&\text{on $\mathbb{R}^2 \times \{0\}$,}
\label{eq:mod_initial_condition}
\end{align}
where $\overline{p}=(b,a)$ is $p=(a,b)$ mirrored at the diagonal. It can be shown that restricting the solution of this extended problem to $\Omega$ yields a solution for the original equation. It is given by convolving the initial condition \eqref{eq:mod_initial_condition} with a Gaussian kernel:
\begin{align}
    \label{eq:solutionpde}
    u(x, t) = \frac{1}{4\pi t} \sum_{p \in D} e^{-\frac{\|x - p\|^2}{4t}} -
    e^{-\frac{\|x - \overline{p}\|^2}{4t}}.
\end{align}
Using this closed form solution of $u$, we can derive a simple expression for evaluating the kernel explicitly:
\begin{align}
k_\sigma(F,G) 
&= \frac{1}{8 \pi \sigma} \sum_{\substack{p \in F\\q \in G}} e^{-\frac{\|p-q\|^2}{8\sigma}} - e^{-\frac{\|p-\overline{q}\|^2}{8\sigma}}.
\label{eqn:l2ip}
\end{align}
We refer to Appendix~\ref{section:kclosedform} for the 
elementary derivation of \eqref{eqn:l2ip} and for a 
visualization (see Appendix~\ref{section:featuremapplots}) of the solution \eqref{eq:solutionpde}. 
Note that the kernel can be computed in 
$\mathcal{O}(|F| \cdot |G|)$ time, where $|F|$ and $|G|$ 
denote the cardinality of the multisets $F$ and $G$,
respectively.





\begin{theorem}
\label{thm:robustness}
The kernel $k_\sigma$ is $1$-Wasserstein stable.
\end{theorem}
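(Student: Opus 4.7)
The plan is to view the feature map of a diagram as a sum over its points of a \emph{single-point} feature map $\phi_\sigma(p) = \tfrac{1}{4\pi\sigma}\bigl(e^{-\|\cdot-p\|^2/(4\sigma)}-e^{-\|\cdot-\bar p\|^2/(4\sigma)}\bigr) \in L_2(\Omega)$, which by construction vanishes whenever $p$ lies on the diagonal (since then $p=\bar p$). Augmenting $F$ and $G$ with the diagonal (call the augmentations $F^*,G^*$) and choosing a bijection $\gamma\colon F^*\to G^*$ attaining $d_{W,1}(F,G)$—such a bijection exists because the diagrams are finite, and it is the identity on all but finitely many diagonal points—the difference of feature maps telescopes as $\Phi_\sigma(F)-\Phi_\sigma(G) = \sum_{p\in F^*}\bigl(\phi_\sigma(p)-\phi_\sigma(\gamma(p))\bigr)$. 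A triangle inequality in $L_2(\Omega)$ then reduces the claim to a uniform pointwise Lipschitz bound $\|\phi_\sigma(p)-\phi_\sigma(q)\|_{L_2(\Omega)}\le C_\sigma\|p-q\|_\infty$ with constant $C_\sigma$ independent of $p,q$, since summing $\|p-\gamma(p)\|_\infty$ over the matching reproduces $d_{W,1}(F,G)$.

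To establish this pointwise bound I would extend the integration domain from $\Omega$ to $\mathbb R^2$, which can only enlarge the $L_2$-norm. There one has $\phi_\sigma(p)-\phi_\sigma(q) = (G^p_\sigma-G^q_\sigma)-(G^{\bar p}_\sigma-G^{\bar q}_\sigma)$, where $G^p_\sigma(x)=\tfrac{1}{4\pi\sigma}e^{-\|x-p\|^2/(4\sigma)}$ is the plain Gaussian. Another triangle inequality together with the symmetry $\|\bar p-\bar q\|=\|p-q\|$ reduces the task to bounding $\|G^p_\sigma-G^q_\sigma\|_{L_2(\mathbb R^2)}$. Expanding the squared norm is the routine Gaussian integral $\tfrac{1}{4\pi\sigma}\bigl(1-e^{-\|p-q\|^2/(8\sigma)}\bigr)$, and the elementary estimates $1-e^{-x}\le x$ together with $\|p-q\|_2\le\sqrt 2\,\|p-q\|_\infty$ then yield the desired Lipschitz constant (explicitly $C_\sigma=\tfrac{1}{2\sqrt\pi\,\sigma}$). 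Putting the two steps together gives $d_{k_\sigma}(F,G)=\|\Phi_\sigma(F)-\Phi_\sigma(G)\|_{L_2(\Omega)}\le C_\sigma\,d_{W,1}(F,G)$, which is $1$-Wasserstein stability.

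The main conceptual obstacle is the uniformity of the Lipschitz constant in the second step: a priori, restricting to the half-plane $\Omega$ could let the constant depend on how close $p$ or $q$ sits to the boundary. The key move that eliminates this dependence is extending to $\mathbb R^2$, where translation invariance collapses the bound to a function of $\|p-q\|$ alone. Everything else is bookkeeping with two applications of the triangle inequality and one standard Gaussian integral.
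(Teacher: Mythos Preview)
Your proof is correct and follows essentially the same approach as the paper: augment with the diagonal, pick an optimal matching, telescope and apply the triangle (Minkowski) inequality, extend from $\Omega$ to $\mathbb R^2$, and finish with the closed-form Gaussian $L_2$-difference together with $1-e^{-\xi}\le\xi$ and $\|\cdot\|_2\le\sqrt2\,\|\cdot\|_\infty$. Your explicit constant $C_\sigma=\tfrac{1}{2\sqrt\pi\,\sigma}$ coincides with the paper's final bound.
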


\begin{proof}
To prove $1$-Wasserstein stability of $k_\sigma$, we show Lipschitz continuity of
the feature map $\Phi_\sigma$ as follows:
\begin{equation}
    \|\Phi_\sigma(F) - \Phi_\sigma(G)\|_{L_2(\Omega)} \leq
    \frac{1}{\sigma\sqrt{8\pi}}\,d_{W,1}(F,G),
    \label{eq:robustness}
\end{equation}
where $F$ and $G$ denote persistence diagrams that have been augmented with
points on the diagonal. Note that augmenting diagrams with points on the
diagonal does not change the values of $\Phi_\sigma$, as can be seen from
\eqref{eq:solutionpde}.
%
%
Since the unaugmented persistence diagrams are assumed to be finite,
some matching $\gamma$ between $F$ and $G$ achieves
the infimum in the definition of the Wasserstein distance, 
    $d_{W,1}(F,G) = \sum_{u \in F} \|u-\gamma(u)\|_\infty$.  Writing $N_u(x) = \frac{1}{4\pi
        \sigma} e^{-\frac{\|x - u\|^2_2}{4\sigma}}$, we have
    $\|N_u-N_v\|_{L_2(\mathbb{R}^2)} = \frac{1}{\sqrt{4 \pi \sigma}} \cdot
    \sqrt{1- e^{-\frac{\|u-v\|_2^2}{8 \sigma}}}$. The Minkowski inequality and
    the inequality $e^{-\xi} \ge 1 - \xi$ finally yield
\begin{align*}
    &\|\Phi_\sigma(F) - \Phi_\sigma(G)\|_{L_2(\Omega)} \\
    &\le \left\| \sum_{u \in F} (N_u - N_{\overline{u}}) - (N_{\gamma(u)} -
        N_{\overline{{\gamma(u)}}}) \right\|_{L_2(\nR^2)} \\
    &\le 2 \sum_{u \in F} \| N_u - N_{\gamma(u)} \|_{L_2(\nR^2)} \\
    &\le \frac{1}{\sqrt{\pi \sigma}} \sum_{u \in F} \sqrt{1-
        e^{-\frac{\|u-{\gamma(u)}\|_2^2}{8 \sigma}}} \\
    &\le \frac{1}{\sigma \sqrt{8 \pi}} \sum_{u \in F} \|u-{\gamma(u)}\|_2 \quad
        \le \quad \frac{1}{2\sigma \, \sqrt{\pi}} d_{W,1}(F,G) . \qedhere
\end{align*}
\end{proof}

We refer to the left-hand side of \eqref{eq:robustness} as the
\textit{persistence scale space distance} $d_{k_{\sigma}}$ between $F$ and $G$. Note
that the right hand side of \eqref{eq:robustness} decreases as $\sigma$
increases. Adjusting~$\sigma$ accordingly allows to counteract the influence of noise in the input data,
which causes an increase in $d_{W,1}(F,G)$.
We will see in Sec.~\ref{subsection:texture_recognition} that tuning 
$\sigma$ to the data can be beneficial for the overall performance of machine learning methods.

A natural question arising from Theorem~\ref{thm:robustness} is
whether our stability result extends to $p>1$. To answer this question,
we first note that our kernel is \textit{additive:} we call 
a kernel $k$ on persistence diagrams additive if $k(E \cup F, G) = 
k(E, G) + k(F, G)$ for all $E, F, G \in \mathcal{D}$. By choosing 
$F = \emptyset$, we see that if $k$ is additive then $k(\emptyset, G)= 0$ 
for all $G \in \mathcal{D}$. We further say that a 
kernel $k$ is \emph{trivial} if $k(F, G) = 0$ for
all $F,G \in \mathcal{D}$. The next theorem establishes that Theorem~\ref{thm:robustness} 
is sharp in the sense that \textit{no} non-trivial additive kernel can be stable 
\wrt the $p$-Wasserstein distance when $p > 1$.




\begin{theorem}
    A non-trivial additive kernel $k$ on persistence diagrams is not stable
    \wrt $d_{W,p}$ for any $1 < p \leq \infty$.
\end{theorem}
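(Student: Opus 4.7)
The plan is to exploit the mismatch between additivity of $k$ and the sublinear growth of the $p$-Wasserstein distance for $p > 1$. For a suitable diagram $F$ I would form its $n$-fold multiset union $F_n = F \cup \cdots \cup F$ and compare the two distances from $F_n$ to the empty diagram: additivity will force $d_k(F_n, \emptyset)$ to grow linearly in $n$, whereas a trivial matching bound gives $d_{W,p}(F_n, \emptyset) = O(n^{1/p})$ for $p < \infty$ and $d_{W,\infty}(F_n, \emptyset) = O(1)$, so the ratio blows up.

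First I would use non-triviality to extract some $F$ with $k(F,F) > 0$: since $k(F,F) = \|\Phi(F)\|^2$ in any feature space associated with $k$, if $k(F,F)$ vanished for every diagram then $\Phi$ would be identically zero and $k$ would be trivial. Additionally, setting $E = F = \emptyset$ in the additivity relation gives $k(\emptyset, G) = 0$ for every $G$, and in particular $k(\emptyset, \emptyset) = 0$.

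Iterating additivity in both arguments then yields $k(F_n, F_n) = n^2 k(F,F)$, hence
\[
d_k(F_n, \emptyset)^2 = k(F_n, F_n) - 2 k(F_n, \emptyset) + k(\emptyset, \emptyset) = n^2 k(F,F),
\]
so $d_k(F_n, \emptyset) = n\sqrt{k(F,F)}$. On the Wasserstein side, the bijection sending every point of $F_n$ to its nearest diagonal point is admissible in the definition of $d_{W,p}(F_n, \emptyset)$, giving $d_{W,p}(F_n, \emptyset) \le C_F\, n^{1/p}$ for $1 < p < \infty$, with $C_F$ a constant depending only on $F$, and $d_{W,\infty}(F_n, \emptyset) \le \tfrac12 \max_{u \in F}\|u - \overline u\|_\infty$, independent of $n$. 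In every case $d_k(F_n, \emptyset)/d_{W,p}(F_n, \emptyset) \to \infty$ as $n \to \infty$, contradicting the existence of any Lipschitz constant.

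The only real obstacle is a degenerate case: if the $F$ extracted in the first step happens to consist purely of diagonal points, then $d_{W,p}(F_n, \emptyset) = 0$ while $d_k(F_n, \emptyset) > 0$, so stability already fails at $n = 1$ and no further argument is required. The substantive content of the proof is therefore the short Hilbert-space extraction of some $F$ with $k(F,F) > 0$, together with the additivity-versus-sublinearity comparison above; I do not anticipate any deeper difficulty.
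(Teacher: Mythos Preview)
Your proposal is correct and follows essentially the same approach as the paper: compare the linear growth of $d_k(F_n,\emptyset)=n\sqrt{k(F,F)}$ against the $n^{1/p}$ (or constant, for $p=\infty$) growth of $d_{W,p}(F_n,\emptyset)$ for the $n$-fold union of a diagram $F$ with $k(F,F)>0$. Your write-up is in fact slightly more complete than the paper's---you spell out the feature-space argument for the existence of such an $F$ and flag the degenerate diagonal-only case---but the core idea is identical.
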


\begin{proof}
By the non-triviality of $k$, it can be shown that there exists an $F \in \mathcal{D}$ 
such that $k(F, F) > 0$. We prove the
claim by comparing the rates of growth of $d_{k_\sigma}(\bigcup_{i=1}^n F,
\emptyset)$ and $d_{W,p}(\bigcup_{i=1}^n F, \emptyset)$ \wrt $n$. We have
\[     d_{k_\sigma}\left(\bigcup_{i=1}^n F, \emptyset\right) = n \, \sqrt{k(F,F)}. \]
On the other hand,
\[     d_{W,p}\left(\bigcup_{i=1}^n F, \emptyset\right) =
    d_{W,p}(F, \emptyset) \cdot
    \begin{cases}
        \sqrt[p]{n} & \text{if $p < \infty$} ,\\
        1 & \text{if $p = \infty$} .
    \end{cases}
\]
Hence, $d_{k_\sigma}$ can not be bounded by $C \cdot d_{W,p}$ with a
constant $C > 0$ if $p > 1$.
\end{proof}

\section{Evaluation}
\label{section:evaluation}

To evaluate the kernel proposed in Sec.~\ref{section:kernel}, we
investigate conceptual differences to persistence landscapes
in Sec.~\ref{subsection:landscape_comparison}, 
and then consider its performance in the context of shape 
classification/retrieval and texture recognition 
in Sec.~\ref{subsection:empirical_results}. 

\subsection{Comparison to persistence landscapes}
\label{subsection:landscape_comparison}

In \cite{Bubenik13a}, Bubenik introduced \textit{persistence landscapes},
a representation of persistence diagrams as functions in the Banach
space $L_p(\mathbb{R}^2)$. This construction was mainly intended 
for statistical computations, enabled by the vector space structure of $L_p$. 
For $p=2$, we can use the Hilbert space structure of 
$L_2(\mathbb{R}^2)$ to construct a kernel analogously to \eqref{eq:kernel_definition}. 
For the purpose of this work, we refer to this kernel as the \textit{persistence landscape kernel} 
$k^L$ and denote by $\Phi^L\colon \mathcal{D} \to L_2(\mathbb{R}^2)$ the corresponding feature 
map. The kernel-induced distance is denoted by $d_{k^L}$. Bubenik
shows stability \wrt a weighted version
of the Wasserstein distance, which for $p=2$ can be summarized
as:

\begin{theorem}[Bubenik \cite{Bubenik13a}]
\label{thm:landscape_stability}
For any two persistence diagrams $F$ and $G$ we have
\begin{align}
\begin{split}
& \|\Phi^L(F)  - \Phi^L(G)\|_{L_2(\mathbb{R}^2)} \leq \\
& \inf_{\gamma} \left( 
\sum_{u \in F} \pers(u) \|u-\gamma(u)\|^2_\infty + \frac{2}{3} \|u-\gamma(u)\|^3_\infty
 \right)^{\frac{1}{2}} ,
\end{split}
\label{eqn:landscape_stability}
\end{align}
where $\pers(u)=d-b$ denotes the persistence of $u=(b,d)$, and $\gamma$ ranges over all bijections from $F$ to $G$.
\end{theorem}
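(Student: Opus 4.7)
The plan is to reduce the $L_2$ distance between landscapes to a sum of $L_2$ distances between individual tent functions, and then bound each such tent-wise distance explicitly. First, recall that for each point $p=(b,d)$ one associates the tent function $\Lambda_p(t)=\max\{0,\min(t-b,d-t)\}$, and the $k$th persistence landscape $\lambda_D(k,t)$ is defined as the $k$th largest element of $\{\Lambda_p(t):p\in D\}$. Viewing $\Phi^L(D)$ as a function of $(k,t)$, I would write
$$\|\Phi^L(F)-\Phi^L(G)\|_{L_2}^2=\int\sum_k\bigl|\lambda_F(k,t)-\lambda_G(k,t)\bigr|^2\,dt.$$

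Next, I would fix a bijection $\gamma$ between the augmented diagrams and observe that for each $t$, $(\lambda_F(k,t))_k$ is the decreasing rearrangement of $(\Lambda_u(t))_{u\in F}$, while $(\lambda_G(k,t))_k$ is the decreasing rearrangement of $(\Lambda_{\gamma(u)}(t))_{u\in F}$. The Hardy--Littlewood--P\'olya rearrangement inequality asserts that sorting two real sequences in the same order can only decrease their $\ell_2$ distance (equivalently, it maximizes their inner product). Applying this inequality pointwise in $t$ and integrating yields
\begin{equation*}
\|\Phi^L(F)-\Phi^L(G)\|_{L_2}^2 \le \sum_{u\in F}\|\Lambda_u-\Lambda_{\gamma(u)}\|_{L_2(\mathbb{R})}^2.
\end{equation*}
Since diagonal points contribute zero tents, the infinitely many diagonal points in the augmented diagrams cause no issue.

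It then suffices to establish, for each $u$ and any $v$ on or above the diagonal with $\epsilon=\|u-v\|_\infty$, the single-tent bound $\|\Lambda_u-\Lambda_v\|_{L_2(\mathbb{R})}^2\le\pers(u)\,\epsilon^2+\tfrac{2}{3}\epsilon^3$. I expect this to be a direct piecewise-linear calculation. Since $p\mapsto\Lambda_p(t)$ is $1$-Lipschitz in $\|\cdot\|_\infty$, on the intersection of the two tents' supports (of length at most $\pers(u)$) the integrand is pointwise at most $\epsilon^2$, contributing at most $\pers(u)\,\epsilon^2$. The symmetric difference of the supports consists of at most two intervals, and on each, only one tent is nonzero, rising or falling linearly from $0$ to magnitude at most $\epsilon$ over a length at most $\epsilon$; each such region contributes $\int_0^\epsilon s^2\,ds=\epsilon^3/3$. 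Summing the three contributions yields the bound, and taking the infimum over $\gamma$ produces \eqref{eqn:landscape_stability}.

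The main obstacle is the pointwise reduction in the second step: the sorted-max that defines $\lambda_D$ is nonlinear in the diagram, so one cannot simply invoke linearity to replace the landscape by a sum over points and reduce to single-point perturbations. The Hardy--Littlewood--P\'olya inequality is exactly the tool that converts this nonlinearity into an additive $\ell_2$ control, and its pointwise application across $t$ is the crucial structural step; once it is in place, the single-tent calculation is essentially routine case-checking (the bound is attained in the limit by the ``stretching'' perturbation $v=u+(-\epsilon,\epsilon)$).
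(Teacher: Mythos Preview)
The paper does not supply a proof of this theorem: it is stated as a cited result of Bubenik~\cite{Bubenik13a} and is immediately followed by the two thought experiments, with no intervening proof environment. Consequently there is no in-paper argument to compare your proposal against.

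That said, your outline is the standard route and is essentially correct. The two substantive steps are (i) the pointwise application of the rearrangement inequality to pass from the landscape (a decreasing sort of the tent values) to a sum of single-tent $L_2$ distances under an arbitrary bijection, and (ii) the elementary piecewise bound $\|\Lambda_u-\Lambda_{v}\|_{L_2}^2\le \pers(u)\,\epsilon^2+\tfrac{2}{3}\epsilon^3$ with $\epsilon=\|u-v\|_\infty$. For (ii), a slightly cleaner split than ``intersection versus symmetric difference'' is to integrate over the support $[b,d]$ of $\Lambda_u$ (length $\pers(u)$, pointwise bound $\epsilon$) and over its complement, where $\Lambda_u\equiv 0$ and $\Lambda_v$ is supported on at most two end-intervals of length $\le\epsilon$ on which $\Lambda_v(t)\le\epsilon$ grows or decays with slope $1$; this avoids any case analysis when one point is on the diagonal. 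With that adjustment your argument goes through, and taking the infimum over $\gamma$ yields the stated inequality.
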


For a better understanding of the stability results given in Theorems \ref{thm:robustness} and 
\ref{thm:landscape_stability}, we present and discuss two thought experiments.

For the first experiment, let $F_{\lambda} = \{-\lambda,\lambda\}$ and 
$G_{\lambda} = \{-\lambda+1,\lambda+1\}$ be two diagrams with
one point each and $\lambda \in \mathbb{R}_{\geq 0}$. The two points 
move away from the diagonal with increasing 
$\lambda$, while maintaining the same Euclidean distance to each other. Consequently,
$d_{W,p}(F_\lambda,G_\lambda)$ and $d_{k_\sigma}(F_\lambda,G_\lambda)$ asymptotically 
approach a constant  as $\lambda\to \infty$.
In contrast, $d_{k^L}(F_\lambda,G_\lambda)$ grows in the order of $\sqrt{\lambda}$ and, 
in particular, 
is unbounded. This means that $d_{k^L}$ emphasizes points of high persistence
in the diagrams, as reflected by the weighting term $\pers(u)$ in \eqref{eqn:landscape_stability}.

In the second experiment, we compare persistence diagrams
from data samples of two fictive classes A (\ie, $F$,$F'$) and B (\ie, $G$), 
illustrated in Fig.~\ref{fig:stability_exp2}.  
We first consider $d_{k^L}(F,F')$. As we have seen in the previous experiment, 
$d_{k^L}$ will be dominated 
by variations in the points of high persistence. Similarly,  $d_{k^L}(F,G)$
will also be dominated by these points as long 
as $\lambda$ is sufficiently large. Hence, instances of classes A and B 
would be inseparable in a nearest neighbor setup. In contrast, $d_{B}$, $d_{W,p}$
and $d_{k_\sigma}$ do \textit{not} over-emphasize points of high 
persistence and thus allow to distinguish classes A and B.
\begin{figure}[t!]
\centering
\includegraphics[width=0.98\columnwidth]{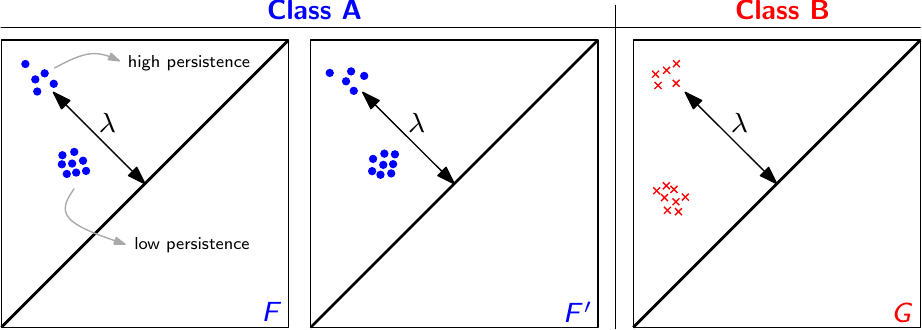}
\caption{Two persistence diagrams \textcolor{blue}{$F,F'$} from \textcolor{blue}{class A} 
and one diagram \textcolor{red}{$G$} from \textcolor{red}{class B}.
The classes only differ in their points of low-persistence (\ie, points closer to 
the diagonal).\label{fig:stability_exp2}}
\end{figure}

\begin{figure*}
\centering \includegraphics[width=1\textwidth]{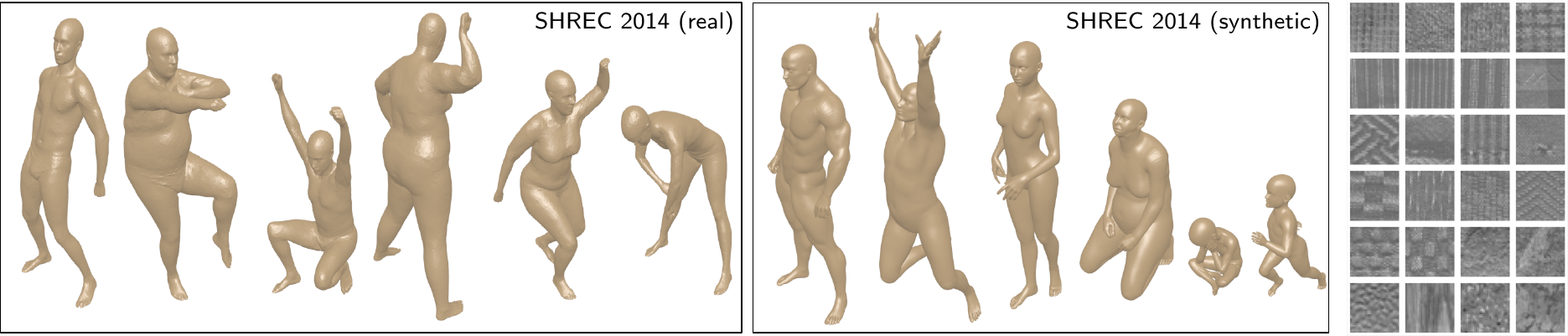}
\caption{Examples from \textsc{SHREC 2014} \cite{Pickup2014} (left, middle) and OuTeX \texttt{Outex\_TC\_00000} \cite{Ojala02a} (right).\label{fig:dataset_visual}}
\end{figure*}

\subsection{Empirical results}
\label{subsection:empirical_results}

We report results on two vision tasks where persistent homology
has already been shown to provide valuable discriminative information~\cite{Li14a}:
\textit{shape classification/retrieval}
and \textit{texture image classification}. The purpose of the experiments 
is \textit{not} to outperform the state-of-the-art on these problems -- 
which would be rather challenging by exclusively using topological information -- 
but to demonstrate the advantages of $k_\sigma$ and $d_{k_\sigma}$ over 
$k^L$ and $d_{k^L}$.

\vspace{-0.3cm}
\paragraph{Datasets.}
For shape classification/retrieval, we use the \textsc{SHREC 2014} \cite{Pickup2014} benchmark, 
see Fig.~\ref{fig:dataset_visual}. It consists of both \textit{synthetic} and \textit{real} shapes, 
given as 3D meshes. The synthetic part of the data contains $300$ 
meshes of humans (five males, five females, five children) in $20$ different 
poses; the real part contains $400$ meshes from $40$ humans (male, female) 
in $10$ different poses. We use the meshes in full resolution, \ie, 
without any mesh decimation. For classification, the objective is 
to distinguish between the different human models, \ie, a 15-class
problem for SHREC 2014 (synthetic) and a 40-class problem for SHREC 2014
(real).

For texture recognition, we use the \texttt{Outex\_TC\_00000} 
benchmark \cite{Ojala02a}, downsampled to $32\times 32$ pixel 
images. The benchmark provides 100 predefined training/testing splits 
and each of the 24 classes is equally represented by 10 images during 
training and testing.

\paragraph{Implementation.} 
For shape classification/retrieval, we compute the classic \textit{Heat 
Kernel Signature (HKS)} \cite{Sun09a} over a range of ten time 
parameters $t_i$ of increasing value. For each specific choice of 
$t_i$, we obtain a piecewise linear function on the surface mesh of 
each object. As discussed in Sec.~\ref{section:background},
we then compute the persistence diagrams of the induced filtrations
in dimensions $0$ and $1$.

For texture classification, we compute CLBP \cite{Guo10a} descriptors,
(\cf~\cite{Li14a}). Results are reported for the rotation-invariant 
versions of the CLBP-Single (\texttt{CLBP-S}) and the CLBP-Magnitude 
(\texttt{CLBP-M}) operator with $P=8$ neighbours and radius $R=1$.
Both operators produce a scalar-valued response image which 
can be interpreted as a weighted cubical cell complex and its lower star
filtration is used to compute persistence diagrams; see \cite{Wagner12a} 
for details.

For both types of input data, the persistence diagrams
are obtained using \textsc{Dipha}~\cite{Bauer14a},
which can directly handle meshes and images. A 
standard soft margin $C$-SVM classifier \cite{Scholkopf01}, as implemented in 
\textsc{Libsvm} \cite{Chang11a}, is used for classification. The 
cost factor $C$ is tuned using ten-fold cross-validation 
on the training data. For the kernel $k_\sigma$, this cross-validation 
further includes the kernel scale $\sigma$.

\vspace{-0.2cm}
\subsubsection{Shape classification}
\label{subsubsection:shape_classification}

Tables \ref{table:shrec14_clf_syn_inner_product} and 
\ref{table:shrec14_clf_real_inner_product}
list the classification results for $k_\sigma$ and 
$k^L$ on \textsc{SHREC 2014}. All results are averaged over 
ten cross-validation runs using random 70/30 training/testing 
splits with a roughly equal class distribution. We report 
results for $1$-dimensional features only; $0$-dimensional
features lead to comparable performance.

On both real and synthetic data, we observe that $k_\sigma$ 
leads to consistent improvements over $k^L$. For some choices of~$t_i$, 
the gains even range up to $30\%$, while in other cases, 
the improvements are relatively small. This can be explained by the 
fact that varying the HKS time $t_i$ essentially varies the 
smoothness of the input data. 
The scale $\sigma$ in $k_\sigma$ allows to compensate---at the classification stage---for 
unfavorable smoothness settings to a certain extent, see Sec.~\ref{section:kernel}.
In contrast, $k^L$ does not have this capability and essentially relies on suitably 
preprocessed input data. For some choices of $t_i$, $k^L$ does in fact lead to classification 
accuracies close to $k_\sigma$. However, when using $k^L$, we have to carefully adjust the 
HKS time parameter, corresponding to changes in the input data.
This is undesirable in most situations, since HKS computation for meshes 
with a large number of vertices can be quite time-consuming and sometimes we might not even have access 
to the meshes directly. The improved classification rates for $k_\sigma$ indicate that 
using the additional degree of freedom is in fact beneficial for performance.

\begin{table}[t!]
\footnotesize
\centering{
\begin{tabular}{|c|c|c||r|}
\hline
HKS $t_i$	& $k^L$ & $k_\sigma$ & \multicolumn{1}{c|}{$\Delta$} \\
\hline
$t_{1}$	 & $68.0 \pm 3.2$ & $94.7 \pm 5.1$ & \cellcolor{green!10}{$+26.7$}\\
$t_{2}$	 & $\mathbf{88.3} \pm 3.3$ & $\mathbf{99.3} \pm 0.9$ & \cellcolor{green!10}{$+11.0$}\\
$t_{3}$	 & $61.7 \pm 3.1$ & $96.3 \pm 2.2$ & \cellcolor{green!10}{$+34.7$}\\
$t_{4}$	 & $81.0 \pm 6.5$ & $97.3 \pm 1.9$ & \cellcolor{green!10}{$+16.3$}\\
$t_{5}$	 & $84.7 \pm 1.8$ & $96.3 \pm 2.5$ & \cellcolor{green!10}{$+11.7$}\\
$t_{6}$	 & $70.0 \pm 7.0$ & $93.7 \pm 3.2$ & \cellcolor{green!10}{$+23.7$}\\
$t_{7}$	 & $73.0 \pm 9.5$ & $88.0 \pm 4.5$ & \cellcolor{green!10}{$+15.0$}\\
$t_{8}$	 & $81.0 \pm 3.8$ & $88.3 \pm 6.0$ & \cellcolor{green!10}{$+7.3$}\\
$t_{9}$	 & $67.3 \pm 7.4$ & $88.0 \pm 5.8$ & \cellcolor{green!10}{$+20.7$}\\
$t_{10}$ & $55.3 \pm 3.6$ & $91.0 \pm 4.0$ & \cellcolor{green!10}{$+35.7$}\\\hline
\end{tabular}}
\caption{\label{table:shrec14_clf_syn_inner_product}
Classification performance on SHREC 2014 (synthetic).}
\bigskip
\centering{
\begin{tabular}{|c|c|c||r|}
\hline
HKS $t_i$	& $k^L$  & $k_\sigma$  & \multicolumn{1}{c|}{$\Delta$} \\
\hline
$t_{1}$	 & $45.2 \pm 5.8$ & $48.8 \pm 4.9$ & \cellcolor{green!10}{$+3.5$}\\
$t_{2}$	 & $31.0 \pm 4.8$ & $46.5 \pm 5.3$ & \cellcolor{green!10}{$+15.5$}\\
$t_{3}$	 & $30.0 \pm 7.3$ & $37.8 \pm 8.2$ & \cellcolor{green!10}{$+7.8$}\\
$t_{4}$	 & $41.2 \pm 2.2$ & $50.2 \pm 5.4$ & \cellcolor{green!10}{$+9.0$}\\
$t_{5}$	 & $46.2 \pm 5.8$ & $62.5 \pm 2.0$ & \cellcolor{green!10}{$+16.2$}\\
$t_{6}$	 & $33.2 \pm 4.1$ & $58.0 \pm 4.0$ & \cellcolor{green!10}{$+24.7$}\\
$t_{7}$	 & $31.0 \pm 5.7$ & $\mathbf{62.7} \pm 4.6$ & \cellcolor{green!10}{$+31.7$}\\
$t_{8}$	 & $\mathbf{51.7} \pm 2.9$ & $57.5 \pm 4.2$ & \cellcolor{green!10}{$+5.8$}\\
$t_{9}$	 & $36.0 \pm 5.3$ & $41.2 \pm 4.9$ & \cellcolor{green!10}{$+5.2$}\\
$t_{10}$ & $2.8 \pm 0.6$ & $27.8 \pm 5.8$ & \cellcolor{green!10}{$+25.0$}\\\hline
\end{tabular}}
\caption{\label{table:shrec14_clf_real_inner_product} Classification performance on \textsc{SHREC 2014} (real).}
\end{table}

\vspace{-0.2cm}
\subsubsection{Shape retrieval}
\label{subsection:shape_retrieval}

In addition to the classification experiments, we report on shape retrieval performance 
using standard evaluation measures (see~\cite{Shilane04a,Pickup2014}). This allows 
us to assess the behavior of the kernel-induced distances $d_{k_\sigma}$ and
$d_{k^L}$.

For brevity, only the nearest-neighbor performance is listed in Table 
\ref{table:shrec14_retrieval} (for a listing of all measures, see
Appendix~\ref{section:additionalresults}). Using each shape as a query shape once, nearest-neighbor performance measures 
how often the top-ranked shape in the retrieval result belongs to the same 
class as the query. To study the effect of tuning 
the scale $\sigma$, the column $d_{k_\sigma}$ lists the \textit{maximum} nearest-neighbor performance 
that can be achieved over a range of scales.

As we can see, the results are similar to the classification experiment. 
However, at a few specific settings of the HKS time $t_i$, $d_{k^L}$ 
performs on par, or better than $d_{k_\sigma}$. As noted in
Sec.~\ref{subsubsection:shape_classification}, this can be explained by the 
changes in the smoothness of the input data, induced by different HKS times
$t_i$. Another observation is that nearest-neighbor performance of $d_{k^L}$ is 
quite unstable around the top result with respect to $t_i$. For example, 
it drops at $t_2$ from 91\% to 53.3\% and 76.7\% on 
\textsc{SHREC 2014} (synthetic) and at $t_8$ from 70\% to 45.2\% and 
43.5\% on \textsc{SHREC 2014} (real). In contrast, $d_{k_\sigma}$
exhibits stable performance around the optimal $t_i$.

To put these results into context with existing works in shape retrieval, 
Table~\ref{table:shrec14_retrieval} also lists the top three entries (out of 22) 
of \cite{Pickup2014} on the same benchmark. On both real and 
synthetic data, $d_{k_\sigma}$ ranks among the top five entries.
This indicates that topological persistence alone is a rich source of 
discriminative information for this particular problem. In addition, since we only assess one
HKS time parameter at a time, performance could potentially be improved 
by more elaborate fusion strategies.

\begin{table}
\footnotesize
\centering{\begin{tabular}{|c|cc||r| c |cc||r|}
\cline{1-4}\cline{6-8}
HKS $t_i$ & \multicolumn{1}{c|}{$d_{k^L}$} & \multicolumn{1}{c||}{$d_{k_\sigma}$} & \multicolumn{1}{c|}{$\Delta$} &\hspace{-0.3cm} & \multicolumn{1}{c|}{$d_{k^L}$} & \multicolumn{1}{c||}{$d_{k_\sigma}$} & 
\multicolumn{1}{c|}{$\Delta$} \\
\hhline{-|---~|---}
$t_1$ & $53.3$ 		& $88.7$ &		\cellcolor{green!10}{$+35.4$} 			&\hspace{-0.3cm} & $24.0$ 	& $23.7$					& $\cellcolor{red!10}{-0.3}$ 		\\
$t_2$ & $\mathbf{91.0}$ 		& $\mathbf{94.7}$ &		$ \cellcolor{green!10}{+3.7}$ 		&\hspace{-0.3cm} & $20.5$ 				& $25.7$		& $\cellcolor{green!10}{+5.2}$		\\
$t_3$ & $76.7$ 		& $91.3$ &		$ \cellcolor{green!10}{+14.6}$			&\hspace{-0.3cm} & $16.0$ 				& $18.5$		& $\cellcolor{green!10}{+2.5}$	 	\\
$t_4$ & $84.3$ 		& $93.0$ &		$ \cellcolor{green!10}{+8.7}$ 			&\hspace{-0.3cm} & $26.8$ 				& $33.0$		& $\cellcolor{green!10}{+6.2}$		\\
$t_5$ & $85.0$ 		& $92.3$ &		$ \cellcolor{green!10}{+7.3}$ 			&\hspace{-0.3cm} & $28.0$ 				& $38.7$		& $\cellcolor{green!10}{+10.7}$ 	\\
$t_6$ & $63.0$ 		& $77.3$ &		$ \cellcolor{green!10}{+14.3}$ 			&\hspace{-0.3cm} & $28.7$ 				& $36.8$		& $\cellcolor{green!10}{+8.1}$		\\
$t_7$ & $65.0$ 		& $80.0$ &		$ \cellcolor{green!10}{+15.0}$ 			&\hspace{-0.3cm} & $43.5$ 				& $52.7$		& $\cellcolor{green!10}{+9.2}$		\\
$t_8$ & $73.3$ 		& $80.7$ &		$ \cellcolor{green!10}{+7.4}$ 			&\hspace{-0.3cm} & $\mathbf{70.0}$ 	& $\mathbf{58.2}$					& $\cellcolor{red!10}{-11.8}$		\\
$t_9$ & $73.0$ 		& $83.0$ &		$ \cellcolor{green!10}{+10.0}$ 			&\hspace{-0.3cm} & $45.2$ 				& $56.7$		& $\cellcolor{green!10}{+11.5}$ 	\\
$t_{10}$ & $51.3$	& $69.3$ &		$ \cellcolor{green!10}{+18.0}$ 			&\hspace{-0.3cm} & $3.5$   				& $44.0$		& $\cellcolor{green!10}{+40.5}	$	\\
\hhline{-|---~|---}
Top-$3$ \cite{Pickup2014} & \multicolumn{3}{c|}{$99.3$ -- $92.3$ -- $91.0$} &\hspace{-0.3cm} & \multicolumn{3}{c|}{$68.5$ -- $59.8$ -- $58.3$  }\\
\hhline{-|---~|---}
\end{tabular}}
\caption{\label{table:shrec14_retrieval}Nearest neighbor retrieval performance. \textit{Left:} \textsc{SHREC 2014}
(synthetic); \textit{Right:} \textsc{SHREC 2014} (real).}
\end{table}

\subsection{Texture recognition}
\label{subsection:texture_recognition}

For texture recognition, all results are averaged over the 
$100$ training/testing splits of the \texttt{Outex\_TC\_00000} 
benchmark. Table \ref{table:outex} lists the performance of
a SVM classifier using $k_\sigma$ and $k^L$ for $0$-dimensional 
features (\ie, connected components). 
Higher-dimensional features were not informative for this problem. For comparison, 
Table~\ref{table:outex} also lists the performance of a SVM, trained on 
normalized histograms of \texttt{CLBP-S/M} responses,
using a $\chi^2$ kernel.

First, from Table \ref{table:outex}, it is evident that $k_\sigma$
performs better than $k^L$ by a large margin, with gains up 
to $\approx$11\% in accuracy. Second, it is also apparent that,
for this problem, topological information alone 
is not competitive with SVMs using simple orderless operator response 
histograms. However, the results of \cite{Li14a} show that a \textit{combination} 
of persistence information (using persistence landscapes) with conventional 
bag-of-feature representations leads to 
state-of-the-art performance. While this indicates the complementary 
nature of topological features, it also suggests that kernel combinations (\eg, via multiple-kernel learning \cite{Gonen11a}) could lead to even greater gains by including the proposed 
kernel $k_\sigma$.

To assess the stability of the (customary) cross-validation strategy
to select a specific $\sigma$, Fig.~\ref{fig:acc_vs_scale} illustrates
classification performance as a function of the latter. Given the 
smoothness of the performance curve, it seems unlikely that 
parameter selection via cross-validation will be sensitive to a specific discretization of the search range $[\sigma_{\min},\sigma_{\max}]$.

Finally, we remark that tuning $k^L$ has the same 
drawbacks in this case as in the shape classification experiments. 
While, in principle, we could smooth the textures, the CLBP response images, 
or even tweak the radius of the CLBP operators, all those strategies 
would require changes at the beginning of the processing pipeline. 
In contrast, adjusting the scale $\sigma$ in $k_\sigma$ is done at the \textit{end} 
of the pipeline during classifier training.

\begin{table}[t!!]
\footnotesize
\centering 
\begin{tabular}{|r|c|c||c|}
\hline
CLBP Operator	&  $k^L$  & $k_\sigma$ & $\Delta$ \\
	\hline
\texttt{CLBP-S} 	& $58.0\pm 2.3$ & $\mathbf{69.2\pm 2.7}$ & \cellcolor{green!10}{$+11.2$} \\

\texttt{CLBP-M} 	& $45.2\pm 2.5$ & $\mathbf{55.1\pm 2.5}$ & \cellcolor{green!10}{$+9.9$} \\
\hline\hline
\texttt{CLBP-S} (SVM-$\chi^2$) & \multicolumn{3}{c|}{$76.1 \pm 2.2$}\\
\texttt{CLBP-M} (SVM-$\chi^2$) & \multicolumn{3}{c|}{$76.7 \pm 1.8$}\\
\hline
\end{tabular}
\caption{Classification performance on \texttt{Outex\_TC\_00000}.\label{table:outex}}
\end{table}

\section{Conclusion}
\label{section:conclusion}

\begin{figure}
\centering{
\includegraphics[width=0.475\columnwidth]{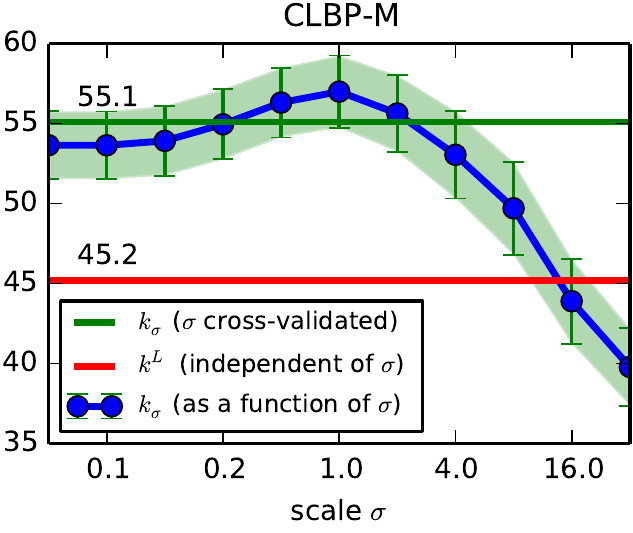}\hfill
\includegraphics[width=0.475\columnwidth]{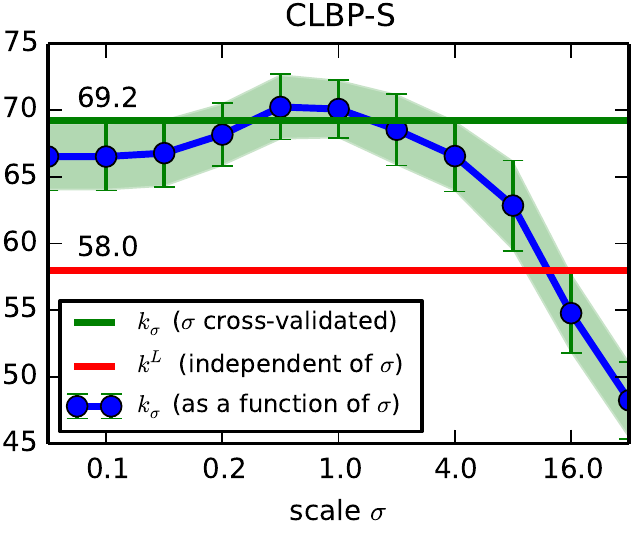}
\caption{\label{fig:acc_vs_scale}Texture classification performance of
a SVM classifier with (1) \textcolor{blue}{the kernel $k_\sigma$ as a function of $\sigma$}, (2) the \textcolor{darkgreen}{kernel $k_\sigma$ with $\sigma$ cross-validated} and (3) the \textcolor{red}{kernel 
$k^L$} are shown.}}
\end{figure}

We have shown, both theoretically and empirically, that the proposed kernel exhibits good behavior for tasks like shape classification or texture recognition using a SVM. Moreover, the ability to tune a scale parameter has proven beneficial in practice.

One possible direction for future work would be to address computational 
bottlenecks in order to enable application in large scale scenarios.  
This could include leveraging additivity and stability in order to 
approximate the value of the kernel within given error bounds, 
in particular, by reducing the number of distinct points in the 
summation of~\eqref{eqn:l2ip}.

While the 1-Wasserstein distance is well established and has proven useful in applications, 
we hope to improve the understanding of stability for persistence diagrams \wrt the Wasserstein distance beyond the previous estimates.
Such a result would extend the stability of our kernel from persistence diagrams to the underlying data, leading
to a full stability proof for topological machine learning.

In summary, our method enables the use of topological information 
in all kernel-based machine learning methods. It will therefore be 
interesting to see which other application areas will profit from 
topological machine learning.




{%
    \footnotesize
    \def\baselinestretch{0.9}
    \bibliographystyle{ieee}

}

\cleardoublepage
\appendix
\section*{Appendix}
\section{Indefiniteness of $d_{W,p}$}
\label{section:definiteness}

It is tempting to try to employ the Wasserstein distance 
for constructing a kernel on persistence diagrams.
For instance, in Euclidean space, $k(x,y) = -\|x - y\|^2, x,y 
\in \mathbb{R}^n$ is conditionally positive definite
and can be used within SVMs. Hence, the question 
arises if $k(x,y) = -d_{W,p}(x,y), x,y \in \mathcal{D}$ can 
be used as well.

In the following, we demonstrate (via counterexamples) that 
neither $-d_{W,p}$ nor $\exp(-\xi d_{W,p}(\cdot,\cdot))$~-- for
different choices of $p$~-- are (conditionally) 
positive definite. Thus, they cannot be employed in 
kernel-based learning techniques.

First, we briefly repeat some definitions to establish the terminology; 
this is done to avoid potential confusion, \wrt references \cite{Berg84a,Bapat97a,Scholkopf01}), 
about what is referred to as (conditionally) positive/negative definiteness 
in the context of kernel functions.

\begin{definition}
A symmetric matrix $\mathbf{A} \in \nR^{n \times n}$ is called positive definite (p.d.) if $\mathbf{c}^\top\mathbf{A} \mathbf{c} \ge 0$ for all $\mathbf{c} \in \nR^n$.
A symmetric matrix $\mathbf{A} \in \nR^{n \times n}$ is called negative definite (n.d.) if $\mathbf{c}^\top \mathbf{A} \mathbf{c} \le 0$ for all $\mathbf{c} \in \nR^n$.
\end{definition}

Note that in literature on linear algebra the notion of definiteness as introduced 
above is typically known as semidefiniteness. For the sake of brevity, in the 
kernel literature the prefix ``semi'' is typically dropped.

\begin{definition}
A symmetric matrix $\mathbf{A} \in \nR^{n \times n}$ is called conditionally positive definite (c.p.d.) if $\mathbf{c}^t \mathbf{A} \mathbf{c} \ge 0$ for all $\mathbf{c} = (c_1, \dots, c_n) \in \nR^n$ s.t.\ $\sum_i c_i = 0$.
A symmetric matrix $\mathbf{A} \in \nR^{n \times n}$ is called conditionally negative definite (c.n.d.) if $\mathbf{c}^\top \mathbf{A} c \le 0$ for all $\mathbf{c} = (c_1, \dots, c_n) \in \nR^n$ s.t.\ $\sum_i c_i = 0$.
\end{definition}

\begin{definition}
Given a set $\mathcal{X}$, a function $k \colon \mathcal{X} \times \mathcal{X} \to \nR$ is a \emph{positive definite kernel} if there exists a Hilbert space $\mathcal{H}$ and a map $\Phi \colon \mathcal{X} \to \mathcal{H}$ such that $k(x,y) = \langle \Phi(x), \Phi(y) \rangle_{\mathcal{H}}$.
\end{definition}

Typically a positive definite kernel is simply called \emph{kernel}. Roughly speaking, the utility 
of p.d.\@ kernels comes from the fact that they enable the ``kernel-trick'', \ie, the use of
algorithms that can be formulated in terms of dot products in an implicit feature space \cite{Scholkopf01}.
However, as shown by Sch\"olkopf in \cite{Schoelkopf01b}, this ``kernel-trick'' also works for 
distances, leading to the larger class of c.p.d. kernels (see Definition~\ref{def:cpdkernel}), which can be used in kernel-based
algorithms that are translation-invariant (\eg, SVMs or kernel PCA).

\begin{definition}
\label{def:cpdkernel}
A function $k \colon \mathcal{X} \times \mathcal{X} \to \nR$ is (conditionally) positive (negative, resp.) definite kernel if and only if $k$ is symmetric and for every finite subset $\{x_1, \dots, x_m\} \subseteq \mathcal{X}$ the Gram matrix $(k(x_i, x_j))_{i,j = 1, 1}^{m, m}$ is (conditionally) positive (negative, resp.) definite.
\end{definition}

To demonstrate that a function is not c.p.d.\@ or c.n.d., resp., we can look at the eigenvalues
of the corresponding Gram matrices. In fact, it is known that a matrix $\mathbf{A}$ is p.d.\@ 
if and only if all its eigenvalues are nonnegative. The following lemmas from \cite{Bapat97a} 
give similar, but weaker results for (nonnegative) c.n.d.\@ matrices, which will be useful to 
us.

\begin{lemma}[see Lemma 4.1.4 of \cite{Bapat97a}]
If $\mathbf{A}$ is a c.n.d.\@ matrix, then $\mathbf{A}$ has at most one positive 
eigenvalue.
\end{lemma}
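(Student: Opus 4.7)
The plan is to prove the contrapositive by a dimension-counting argument: if $\mathbf{A}$ had two positive eigenvalues, I would exhibit a vector $\mathbf{c}$ in the hyperplane $H = \{\mathbf{c} \in \mathbb{R}^n : \sum_i c_i = 0\}$ with $\mathbf{c}^\top \mathbf{A} \mathbf{c} > 0$, contradicting c.n.d.-ness.

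Concretely, assume $\mathbf{A}$ has (at least) two positive eigenvalues $\lambda_1, \lambda_2 > 0$, counted with multiplicity, and let $\mathbf{v}_1, \mathbf{v}_2$ be corresponding orthonormal eigenvectors (using that $\mathbf{A}$ is symmetric, so an orthonormal eigenbasis exists). Let $W = \mathrm{span}(\mathbf{v}_1, \mathbf{v}_2)$; then $\dim W = 2$. The constraint set $H$ is the orthogonal complement of $\mathbf{1} = (1, \ldots, 1)^\top$ in $\mathbb{R}^n$, and hence $\dim H = n-1$. By the standard dimension formula,
\begin{equation*}
\dim(W \cap H) \geq \dim W + \dim H - n = 2 + (n-1) - n = 1,
\end{equation*}
so there exists a nonzero vector $\mathbf{c} \in W \cap H$.

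Write $\mathbf{c} = \alpha \mathbf{v}_1 + \beta \mathbf{v}_2$ with $(\alpha, \beta) \neq (0,0)$. Using orthonormality of the $\mathbf{v}_i$ and $\mathbf{A} \mathbf{v}_i = \lambda_i \mathbf{v}_i$, we compute
\begin{equation*}
\mathbf{c}^\top \mathbf{A} \mathbf{c} = \alpha^2 \lambda_1 + \beta^2 \lambda_2 > 0,
\end{equation*}
since both $\lambda_i > 0$ and not both coefficients vanish. But $\mathbf{c} \in H$ means $\sum_i c_i = 0$, and c.n.d.-ness of $\mathbf{A}$ then forces $\mathbf{c}^\top \mathbf{A} \mathbf{c} \leq 0$, a contradiction. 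Thus $\mathbf{A}$ can have at most one positive eigenvalue.

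The argument is essentially elementary linear algebra and I do not expect any genuine obstacle; the only subtle point to keep straight is that c.n.d.-ness is stated as a closed condition on the hyperplane $H$, so one really only needs to produce a single nonzero vector in $W \cap H$ on which the quadratic form is strictly positive, which the dimension count guarantees.
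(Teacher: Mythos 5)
Your argument is correct and complete: the symmetric matrix has an orthonormal eigenbasis, the span $W$ of two eigenvectors with positive eigenvalues meets the codimension-one hyperplane $H=\{\mathbf{c}:\sum_i c_i=0\}$ in a subspace of dimension at least $2+(n-1)-n=1$, and any nonzero $\mathbf{c}=\alpha\mathbf{v}_1+\beta\mathbf{v}_2$ in that intersection satisfies $\mathbf{c}^\top\mathbf{A}\mathbf{c}=\alpha^2\lambda_1+\beta^2\lambda_2>0$ while having coordinate sum zero, contradicting the definition of c.n.d.\@ used in the paper. Note that the paper itself does not prove this lemma at all; it is imported verbatim as Lemma 4.1.4 of \cite{Bapat97a}, so there is no in-paper argument to compare against. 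Your dimension-counting proof is the standard one for this classical fact (it is the same mechanism behind the statement that a quadratic form which is negative semidefinite on a subspace of codimension $k$ has at most $k$ positive eigenvalues), and it correctly handles the case of a repeated positive eigenvalue by taking two orthonormal vectors from the eigenspace. The only degenerate case, $n=1$, is vacuous since two positive eigenvalues cannot then exist.
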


\begin{corollary}[see Corollary 4.1.5 of \cite{Bapat97a}]
Let $\mathbf{A}$ be a nonnegative, nonzero matrix that is c.n.d. Then 
$\mathbf{A}$ has exactly one positive eigenvalue.
\label{cor:nonnegativecnd}
\end{corollary}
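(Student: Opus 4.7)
The plan is to combine the immediately preceding Lemma---which already supplies an upper bound of at most one positive eigenvalue for any c.n.d.\ matrix---with a short, separate argument establishing the existence of \emph{at least} one positive eigenvalue under the additional hypotheses that $\mathbf{A}$ is entrywise nonnegative and nonzero. Since c.n.d.\ matrices are symmetric by definition, all their eigenvalues are real, and the largest one admits the variational characterization $\lambda_{\max}(\mathbf{A}) = \max_{x\neq 0}(x^\top \mathbf{A} x)/(x^\top x)$, which is the tool I would use.

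The concrete step is to evaluate this Rayleigh quotient at the all-ones vector $\mathbf{1}\in\mathbb{R}^n$, giving
\[
    \lambda_{\max}(\mathbf{A}) \;\geq\; \frac{\mathbf{1}^\top \mathbf{A} \mathbf{1}}{n} \;=\; \frac{1}{n}\sum_{i,j} \mathbf{A}_{ij} .
\]
Because every entry of $\mathbf{A}$ is nonnegative and at least one entry is strictly positive, the right-hand side is strictly positive, hence $\lambda_{\max}(\mathbf{A})>0$. Combining this with the Lemma, which bounds the number of positive eigenvalues above by one, yields exactly one positive eigenvalue, as required.

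I do not expect any serious obstacle here: the substantive work is already packaged in the preceding Lemma, whose proof typically rests on an interlacing or Courant--Fischer argument exploiting the defining c.n.d.\ inequality on the hyperplane $\sum_i c_i=0$. The new ingredient---that the spectrum is not entirely nonpositive---is essentially a one-line consequence of Perron--Frobenius for nonnegative symmetric matrices, or equivalently of the Rayleigh estimate above. An alternative route would invoke that the spectral radius of a nonnegative matrix is always an eigenvalue; for a nonzero nonnegative matrix this spectral radius is positive, and the argument then closes in the same way. The only mild care needed is to verify that a symmetric nonnegative nonzero matrix cannot be negative semidefinite, which follows by considering diagonal entries $\mathbf{A}_{ii}=e_i^\top \mathbf{A} e_i$ and off-diagonal ones via $(e_i+e_j)^\top \mathbf{A}(e_i+e_j)$; but this alternative derivation is ultimately a restatement of the same Rayleigh-quotient observation.
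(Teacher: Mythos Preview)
Your argument is correct: the Rayleigh-quotient estimate at the all-ones vector cleanly forces $\lambda_{\max}(\mathbf{A})>0$, and together with the preceding Lemma this pins down exactly one positive eigenvalue. Note, however, that the paper does not actually supply a proof of this corollary---it is simply quoted from \cite{Bapat97a} (Corollary~4.1.5) as a known fact, so there is no in-paper argument to compare against. Your proof is a standard and fully adequate justification of the cited result.
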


The following theorem establishes a relation between c.n.d.\@ and p.d.\@ kernels.

\begin{theorem}[see Chapter 2, \S2, Theorem 2.2 of \cite{Berg84a}]
Let $\mathcal{X}$ be a nonempty set and let $k: \mathcal{X} \times \mathcal{X} 
\to \mathbb{R}$ be symmetric. Then 
$k$ is a conditionally negative definite kernel if and only if $\exp(-\xi k(\cdot,\cdot))$ 
is a positive definite kernel for all $\xi >0$.
\label{thm:1}
\end{theorem}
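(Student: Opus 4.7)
The plan is to establish this classical Schoenberg correspondence via a ``centering'' trick that relates conditional negative definiteness of $k$ to positive definiteness of a shifted kernel, and then to exploit closure properties of each class under products and pointwise limits. The key intermediate step is the lemma: for any symmetric $k$ and any fixed basepoint $x_0 \in \mathcal{X}$,
\[
\tilde{k}(x,y) := k(x,x_0) + k(y,x_0) - k(x,y) - k(x_0,x_0)
\]
is positive definite if and only if $k$ is conditionally negative definite. The $(\Leftarrow)$ direction (c.n.d.\ implies $\tilde{k}$ p.d.) follows by augmenting any coefficient vector $c_1,\dots,c_n$ with an extra weight $c_0 := -\sum_i c_i$ placed at $x_0$, so that the enlarged vector sums to zero, applying the c.n.d.\ property of $k$ to the augmented Gram matrix, and simplifying. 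The $(\Rightarrow)$ direction is immediate, since whenever $\sum_i c_i = 0$ the two quadratic forms $\sum_{i,j} c_i c_j \tilde{k}(x_i,x_j)$ and $-\sum_{i,j} c_i c_j k(x_i,x_j)$ agree.

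For the forward direction of the theorem, assume $k$ is c.n.d. Using the lemma I rewrite
\[
e^{-\xi k(x,y)} = e^{\xi \tilde{k}(x,y)} \cdot f(x) f(y) \cdot e^{\xi k(x_0,x_0)},
\]
where $f(x) := e^{-\xi k(x,x_0)}$. The rank-one kernel $(x,y) \mapsto f(x) f(y)$ is p.d., and $e^{\xi k(x_0,x_0)}$ is a positive constant (hence p.d.\ as a constant kernel). The kernel $e^{\xi \tilde{k}}$ is p.d.\ because the cone of p.d.\ kernels is closed under pointwise products (Schur product theorem), nonnegative linear combinations, and pointwise limits; applying these operations to the Taylor series of $\exp$ preserves positive definiteness. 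A pointwise product of p.d.\ kernels is p.d., so $\exp(-\xi k)$ is p.d.\ for every $\xi > 0$.

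For the converse, assume $\exp(-\xi k)$ is p.d.\ for all $\xi > 0$. Constant kernels are both p.d.\ and c.n.d., and the negative of a p.d.\ kernel is c.n.d., so
\[
k_\xi(x,y) := \frac{1 - e^{-\xi k(x,y)}}{\xi}
\]
is c.n.d.\ for every $\xi > 0$. Since $k_\xi(x,y) \to k(x,y)$ pointwise as $\xi \to 0^+$, and the defining inequality of c.n.d.\ passes to pointwise limits, $k$ itself is c.n.d. The main obstacle throughout is the centering lemma, where the bookkeeping of the augmented coefficient vector must be carried out carefully; once that is in hand, both directions of the theorem reduce to standard closure properties of the cones of p.d.\ and c.n.d.\ kernels.
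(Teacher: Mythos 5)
Your proof is correct: the centering lemma, the factorization $e^{-\xi k(x,y)}=e^{\xi\tilde{k}(x,y)}f(x)f(y)e^{\xi k(x_0,x_0)}$ combined with the Schur product theorem for the forward direction, and the limit $k=\lim_{\xi\to 0^+}(1-e^{-\xi k})/\xi$ of c.n.d.\ kernels for the converse all check out. The paper does not prove this statement itself but merely cites Berg et al.\ \cite{Berg84a}, and your argument is precisely the classical proof given in that reference, so there is nothing to contrast.
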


In the code (\texttt{test\_negative\_type\_simple.m})\footnote{\url{https://gist.github.com/rkwitt/4c1e235d702718a492d3}; the file 
\texttt{options\_cvpr15.mat} can be found at: \url{http://www.rkwitt.org/media/files/options_cvpr15.mat}}, we generate simple examples for which the Gram matrix 
$\mathbf{A} = (d_{W,p}(x_i,x_j))_{i,j=1,1}^{m,m}$ -- for various choices of $p$ -- has at least two positive and two negative eigenvalue. Thus, it is neither (c.)n.d.\@ nor (c.)p.d. according to Corollary~\ref{cor:nonnegativecnd}. Consequently, the 
function $\exp(-d_{W,p})$ is not p.d.\@ either, by virtue of Theorem~\ref{thm:1}. 
To run the \textsc{Matlab} code, simply execute:
\begin{lstlisting}
load options_cvpr15.mat;
test_negative_type_simple(options);
\end{lstlisting}
This will generate a short summary of the eigenvalue computations for a selection 
of values for $p$, including $p=\infty$ (bottleneck distance).

\vspace{0.2cm}\noindent
\textbf{Remark.} While our simple counterexamples suggest that 
typical kernel constructions using $d_{W,p}$ for different $p$ 
(including $p=\infty$) do not lead to (c.)p.d. kernels, a formal
assessment of this question remains an open research question.

\section{Plots of the feature map $\Phi_\sigma$}
\label{section:featuremapplots}

Given a persistence diagram $D$, we consider the solution $u\colon \Omega \times
\mathbb{R}_{\geq 0} \rightarrow \mathbb{R}, (x,t) \mapsto u(x,t)$ of the
following partial differential equation
\begin{align*}
    \Delta_x u &= \partial_t u &&\text{in $\Omega \times \mathbb{R}_{> 0}$}, \\
    u &= 0 &&\text{on $\partial\Omega \times \mathbb{R}_{\geq 0}$}, \\
    u &= \sum_{p \in D} \delta_p &&\text{on $\Omega \times \{0\}$}.
\end{align*}
To solve the partial differential equation, we extend the domain from $\Omega$ to $\nR^2$ and
consider for each $p \in D$ a Dirac delta $\delta_p$ and a Dirac delta
$-\delta_{\overline{p}}$, as illustrated in Fig.~\ref{fig:plot3d-pre} (left). By convolving $\sum_{p
    \in D} \delta_p - \delta_{\overline{p}}$ with a Gaussian kernel, see 
    Fig.~\ref{fig:plot3d-pre} (right), we obtain a
solution $u\colon \nR^2 \times \mathbb{R}_{\geq 0} \rightarrow \mathbb{R}, (x,t)
\mapsto u(x,t)$ for the following partial differential equation:
\begin{align*}
    \Delta_x u &= \partial_t u &&\text{in $\nR^2 \times \mathbb{R}_{> 0}$}, \\
    u &= \sum_{p \in D} \delta_p - \delta_{\overline{p}} &&\text{on $\nR^2
        \times \{0\}$}.
\end{align*}
Restricting the solution $u$ to $\Omega \times \nR_{\ge 0}$, we then obtain
the following solution $u \colon \Omega \times \nR_{\ge 0} \to \nR$,
\begin{align}
    u(x, t) = \frac{1}{4\pi t} \sum_{p \in D} e^{-\frac{\|x - p\|^2}{4t}} -
    e^{-\frac{\|x - \overline{p}\|^2}{4t}}
\end{align}
for the original partial differential equation and $t > 0$. This 
yields the feature map $\Phi_\sigma \colon \mathcal{D} 
\to L_2(\Omega)$:
\begin{align}
    \Phi_\sigma(D) \colon \Omega \to \nR, \quad x \mapsto \frac{1}{4\pi \sigma}
    \sum_{p \in D} e^{-\frac{\|x - p\|^2}{4 \sigma}} - e^{-\frac{\|x -
            \overline{p}\|^2}{4 \sigma}} .
\end{align}

\begin{figure}[tbh]
    \centering
    \includegraphics[page=1, viewport = 0 0 195 130, clip=true,scale=0.5]{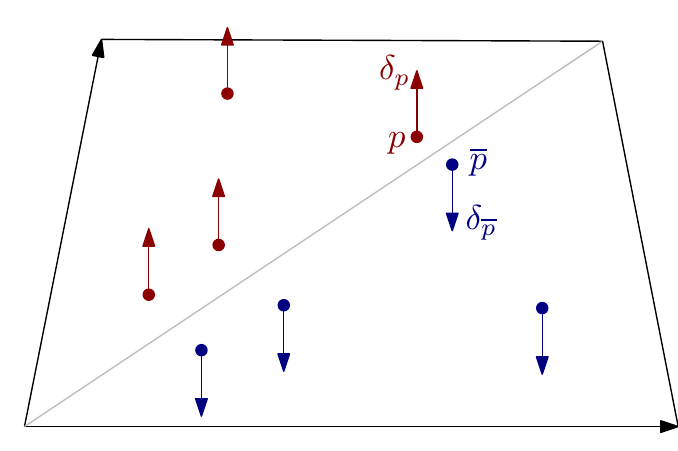}
    \hspace{1em}
    \includegraphics[page=2, viewport = 0 0 195 130, clip=true,scale=0.5]{plot3d-pre}
    \caption{\label{fig:plot3d-pre}
        Solving the partial differential equation: First (left), we extend
        the domain from $\Omega$ to $\nR^2$ and consider for each $p \in D$ a Dirac delta
        $\delta_p$ (\textcolor{myred}{red}) and a Dirac delta $-\delta_{\overline{p}}$ (\textcolor{myblue}{blue}). Next
        (right), we convolve $\sum_{p \in D} \delta_p - \delta_{\overline{p}}$
        with a Gaussian kernel.}
\end{figure}

In Fig.~\ref{fig:featuremapsigma}, we illustrate the effect of an increasing
scale $\sigma$ on the feature map $\Phi_\sigma(D)$. Note that in the right plot
the influence of the low-persistence point close to the diagonal basically
vanishes. This effect is essentially due to the Dirichlet boundary condition and
is responsible for gaining stability for our persistence scale-space kernel
$k_\sigma$.

\begin{figure}[tbh]
    \centering
    \includegraphics[width=.32\columnwidth, viewport = 30 80 610 510, clip=true]{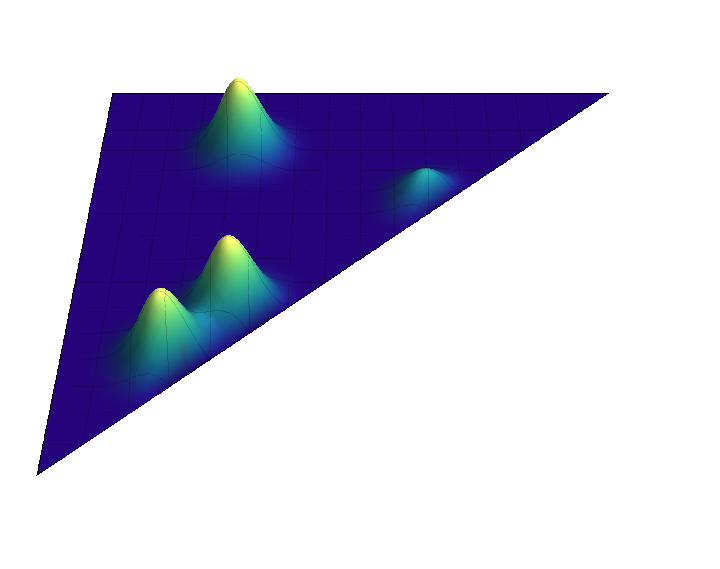}
    \includegraphics[width=.32\columnwidth, viewport = 30 80 610 510, clip=true]{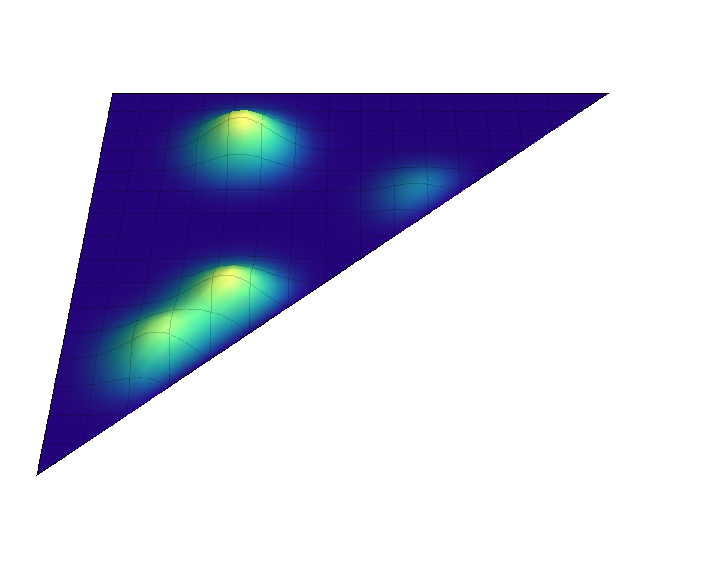}
    \includegraphics[width=.32\columnwidth, viewport = 30 80 610 510, clip=true]{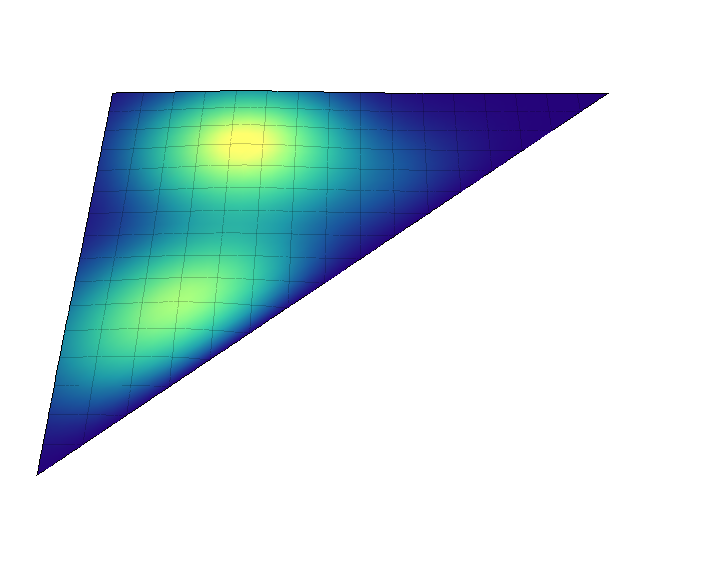}
    \caption{\label{fig:featuremapsigma}An illustration of the feature map
        $\Phi_\sigma(D)$ as a function in $L_2(\Omega)$ at growing scales
        $\sigma$ (from left to right).}
\end{figure}

\section{Closed-form solution for $k_\sigma$}
\label{section:kclosedform}

For two persistence diagrams $F$ and $G$, the persistence scale-space kernel
$k_\sigma(F, G)$ is 
defined as $\langle \Phi_\sigma(F),\Phi_\sigma(G)
\rangle_{L_2(\Omega)}$, which is
\begin{align*}
k_\sigma(F,G)
&= \int_{\Omega} \Phi_\sigma(F) \, \Phi_\sigma(G) \,dx.
\end{align*}
%
By
extending its domain from $\Omega$ to $\nR^2$, we see that $\Phi_\sigma(D)(x) =
- \Phi_\sigma(D)(\overline{x})$ for all $x \in \nR^2$. Hence, $\Phi_\sigma(F)(x)
\cdot \Phi_\sigma(G)(x) = \Phi_\sigma(F)(\overline{x}) \cdot
\Phi_\sigma(G)(\overline{x})$ for all $x \in \nR^2$, and we obtain
\begin{align*}
k_\sigma(F,G)
&= \frac{1}{2} \int_{\nR^2} \Phi_\sigma(F) \, \Phi_\sigma(G) \,dx \\
&= \frac{1}{2} \frac{1}{(4 \pi \sigma)^2} \int_{\nR^2}
    \left( \sum_{p \in F} e^{-\frac{\|x - p\|^2}{4 \sigma}} -
    e^{-\frac{\|x - \overline{p}\|^2}{4 \sigma}} \right)
    \cdot\\
    &\quad\left( \sum_{q \in G} e^{-\frac{\|x - q\|^2}{4 \sigma}} -
    e^{-\frac{\|x - \overline{q}\|^2}{4 \sigma}} \right)
     \,dx \\
&= \frac{1}{2} \frac{1}{(4 \pi \sigma)^2} \sum_{\substack{p \in F\\ q \in G}} \int_{\nR^2}
    \left( e^{-\frac{\|x - p\|^2}{4 \sigma}} -
    e^{-\frac{\|x - \overline{p}\|^2}{4 \sigma}} \right)
    \cdot\\
    &\quad \left( e^{-\frac{\|x - q\|^2}{4 \sigma}} -
    e^{-\frac{\|x - \overline{q}\|^2}{4 \sigma}} \right)
     \,dx \\
&= \frac{1}{(4 \pi \sigma)^2} \sum_{\substack{p \in F\\ q \in G}} \int_{\nR^2}
    e^{-\frac{\|x - p\|^2 + \|x - q\|^2}{4 \sigma}} - e^{-\frac{\|x - p\|^2 + \|x -
            \overline{q}\|^2}{4 \sigma}} \,dx.
\end{align*}
We calculate the integrals as follows:
\begin{align*}
    \int_{\nR^2} e^{-\frac{\|x - p\|^2 + \|x - q\|^2}{4 \sigma}} \, dx
    &= \int_{\nR^2} e^{-\frac{\|x - (p-q)\|^2 + \|x \|^2}{4 \sigma}} \, dx \\
    &=  \int_{\nR} \int_{\nR} e^{-\frac{(x_1 - \|p-q\|)^2 + x_2^2 \;+\; x_1^2 +
            x_2^2}{4 \sigma}} \, dx_1\, dx_2 \\
    &= \int_{\nR} e^{-\frac{x_2^2}{2 \sigma}} \, dx_2 \cdot
        \int_{\nR} e^{-\frac{(x_1 - \|p-q\|)^2 + x_1^2}{4 \sigma}} \, dx_1 \\
    &= \sqrt{2 \pi \sigma} \cdot
        \int_{\nR} e^{-\frac{(x_1 - \|p-q\|)^2 + x_1^2}{4 \sigma}} \, dx_1 \\
    &= \sqrt{2 \pi \sigma} \cdot
        \int_{\nR} e^{-\frac{(2 x_1 - \|p-q\|)^2 + \|p-q\|^2}{8 \sigma}} \, dx_1 \\
    &= \sqrt{2 \pi \sigma} \; e^{-\frac{\|p-q\|^2}{8 \sigma}} \cdot
        \int_{\nR} e^{-\frac{(2 x_1 - \|p-q\|)^2 }{8 \sigma}} \, dx_1 \\
    &= \sqrt{2 \pi \sigma} \; e^{-\frac{\|p-q\|^2}{8 \sigma}} \cdot
        \int_{\nR} e^{-\frac{x_1^2 }{2\sigma}} \, dx_1 \\
    &= 2 \pi \sigma \; e^{-\frac{\|p-q\|^2}{8\sigma}}.
\end{align*}
In the first step, we applied a coordinate transform that moves $x-q$ to $x$. In
the second step, we performed a rotation such that $p-q$ lands on the positive
$x_1$-axis at distance $\|p-q\|$ to the origin and we applied Fubini's theorem.
We finally obtain the closed-form expression for the kernel $k_\sigma$ as:
\begin{align*}
k_\sigma(F,G)
&= \frac{1}{(4 \pi \sigma)^2} \, 2 \pi \sigma \sum_{\substack{p \in F\\ q \in G}}
e^{-\frac{\|p-q\|^2}{8\sigma}} - e^{-\frac{\|p-\overline{q}\|^2}{8\sigma}}\\
&= \frac{1}{8 \pi \sigma} \sum_{\substack{p \in F\\ q \in G}}
e^{-\frac{\|p-q\|^2}{8\sigma}} - e^{-\frac{\|p-\overline{q}\|^2}{8\sigma}} .
\end{align*}

\newpage
\section{Additional retrieval results on SHREC 2014}
\label{section:additionalresults}
\begin{table}[h!]
\scriptsize
\begin{center}
\begin{tabular}{|c|cc||r| c |cc||r|}
\cline{1-4}\cline{6-8}
HKS $t_i$ & \multicolumn{1}{c|}{$d_{k^L}$} & \multicolumn{1}{c||}{$d_{k_\sigma}$} & \multicolumn{1}{c|}{$\Delta$} &\hspace{-0.3cm} & \multicolumn{1}{c|}{$d_{k^L}$} & \multicolumn{1}{c||}{$d_{k_\sigma}$} & 
\multicolumn{1}{c|}{$\Delta$} \\
\hhline{-|---~|---}
$t_1$ & $59.9$ 	& $71.3$ & $\cellcolor{green!10}{+11.4}$			&\hspace{-0.3cm} & $26.0$ & $21.4$ & $\cellcolor{red!10}{-4.6}$\\
$t_2$ & $\mathbf{75.1}$ 	& $76.0$ & $\cellcolor{green!10}{+0.9}$		&\hspace{-0.3cm} & $23.8$ & $22.7$ & $\cellcolor{red!10}{-1.1}$\\
$t_3$ & $49.6$ 	& $64.8$ & $\cellcolor{green!10}{+15.2}$			&\hspace{-0.3cm} & $19.1$ & $20.7$ & $\cellcolor{green!10}{+1.6}$\\
$t_4$ & $59.4$ 	& $\mathbf{77.5}$ & $\cellcolor{green!10}{+18.1}$	&\hspace{-0.3cm} & $23.5$ & $26.1$ & $\cellcolor{green!10}{+2.6}$\\
$t_5$ & $68.1$ 	& $75.2$ & $\cellcolor{green!10}{+7.1}$				&\hspace{-0.3cm} & $22.7$ & $27.4$ & $\cellcolor{green!10}{+4.7}$\\
$t_6$ & $50.0$ 	& $55.2$ & $\cellcolor{green!10}{+5.2}$				&\hspace{-0.3cm} & $18.9$ & $26.2$ & $\cellcolor{green!10}{+7.3}$\\
$t_7$ & $47.6$ 	& $53.6$ & $\cellcolor{green!10}{+6.0}$				&\hspace{-0.3cm} & $27.4$ & $31.8$ & $\cellcolor{green!10}{+4.4}$\\
$t_8$ & $53.1$ 	& $62.4$ & $\cellcolor{green!10}{+9.3}$				&\hspace{-0.3cm} & $\mathbf{45.3}$ & $\mathbf{39.8}$ & $\cellcolor{red!10}{-5.5}$\\
$t_9$ & $51.2$ 	& $56.3$ & $\cellcolor{green!10}{+5.1}$				&\hspace{-0.3cm} & $24.4$ & $30.3$ & $\cellcolor{green!10}{+5.9}$\\
$t_{10}$ & $39.6$ 	& $49.7$ & $\cellcolor{green!10}{+10.1}$			&\hspace{-0.3cm} & $2.5$  & $21.8$ & $\cellcolor{green!10}{+19.3}$\\
\hhline{-|---~|---}
Top-$3$ \cite{Pickup2014} & \multicolumn{3}{c|}{$83.2$ -- $76.4$ -- $76.0$} &\hspace{-0.3cm} & \multicolumn{3}{c|}{$54.1$ -- $47.2$ -- $45.1$  }\\
\hhline{-|---~|---}
\end{tabular}
\end{center}
\caption{\label{table:shrec14_retrieval_t1}\textbf{T1} retrieval performance. \textit{Left:} \textsc{SHREC 2014}
(synthetic); \textit{Right:} \textsc{SHREC 2014} (real).}
\begin{center}
\begin{tabular}{|c|cc||r| c |cc||r|}
\cline{1-4}\cline{6-8}
HKS $t_i$ & \multicolumn{1}{c|}{$d_{k^L}$} & \multicolumn{1}{c||}{$d_{k_\sigma}$} & \multicolumn{1}{c|}{$\Delta$} &\hspace{-0.3cm} & \multicolumn{1}{c|}{$d_{k^L}$} & \multicolumn{1}{c||}{$d_{k_\sigma}$} & 
\multicolumn{1}{c|}{$\Delta$} \\
\hhline{-|---~|---}
$t_1$ & $87.7$ & $91.4$ & $\cellcolor{green!10}{+3.7}$		&\hspace{-0.3cm} & $41.5$ & $34.6$ & $\cellcolor{red!10}{-6.9}$\\
$t_2$ & $\mathbf{91.1}$ & $\mathbf{95.1}$ & $\cellcolor{green!10}{+4.0}$		&\hspace{-0.3cm} & $40.8$ & $37.1$ & $\cellcolor{red!10}{-3.7}$\\
$t_3$ & $70.4$ & $83.4$ & $\cellcolor{green!10}{+13.0}$		&\hspace{-0.3cm} & $36.5$ & $36.8$ & $\cellcolor{green!10}{+0.3}$\\
$t_4$ & $77.7$ & $93.6$ & $\cellcolor{green!10}{+15.9}$		&\hspace{-0.3cm} & $39.8$ & $43.4$ & $\cellcolor{green!10}{+3.6}$\\
$t_5$ & $90.8$ & $92.3$ & $\cellcolor{green!10}{+1.5}$		&\hspace{-0.3cm} & $35.1$ & $41.8$ & $\cellcolor{green!10}{+6.7}$\\
$t_6$ & $73.9$ & $75.4$ & $\cellcolor{green!10}{+1.5}$		&\hspace{-0.3cm} & $31.6$ & $40.2$ & $\cellcolor{green!10}{+8.6}$\\
$t_7$ & $70.6$ & $74.4$ & $\cellcolor{green!10}{+3.8}$		&\hspace{-0.3cm} & $38.6$ & $47.6$ & $\cellcolor{green!10}{+9.0}$\\
$t_8$ & $73.3$ & $79.3$ & $\cellcolor{green!10}{+6.0}$		&\hspace{-0.3cm} & $\mathbf{56.5}$ & $\mathbf{57.6}$ & $\cellcolor{green!10}{+1.1}$\\
$t_9$ & $72.7$ & $76.2$ & $\cellcolor{green!10}{+3.5}$		&\hspace{-0.3cm} & $31.8$ & $42.5$ & $\cellcolor{green!10}{+10.7}$\\
$t_{10}$ & $57.8$ & $66.6$ & $\cellcolor{green!10}{+8.8}$   	&\hspace{-0.3cm} & $4.8$  & $31.0$ & $\cellcolor{green!10}{+26.2}$\\
\hhline{-|---~|---}
Top-$3$ \cite{Pickup2014} & \multicolumn{3}{c|}{$98.7$ -- $97.1$ -- $94.9$} &\hspace{-0.3cm} & \multicolumn{3}{c|}{$74.2$ -- $65.9$ -- $65.7$  }\\
\hhline{-|---~|---}
\end{tabular}
\end{center}
\caption{\label{table:shrec14_retrieval_t2}\textbf{T2} retrieval performance. \textit{Left:} \textsc{SHREC 2014}
(synthetic); \textit{Right:} \textsc{SHREC 2014} (real).}
\begin{center}
\begin{tabular}{|c|cc||r| c |cc||r|}
\cline{1-4}\cline{6-8}
HKS $t_i$ & \multicolumn{1}{c|}{$d_{k^L}$} & \multicolumn{1}{c||}{$d_{k_\sigma}$} & \multicolumn{1}{c|}{$\Delta$} &\hspace{-0.3cm} & \multicolumn{1}{c|}{$d_{k^L}$} & \multicolumn{1}{c||}{$d_{k_\sigma}$} & 
\multicolumn{1}{c|}{$\Delta$} \\
\hhline{-|---~|---}
$t_1$ & $60.6$ & $65.3$ & $\cellcolor{green!10}{+4.7}$		&\hspace{-0.3cm} & $25.4$ & $22.8$ & $\cellcolor{red!10}{-2.6}$\\
$t_2$ & $\mathbf{65.0}$ & $67.4$ & $\cellcolor{green!10}{+2.4}$&\hspace{-0.3cm} & $25.0$ & $23.4$ & $\cellcolor{red!10}{-1.6}$\\
$t_3$ & $48.4$ & $58.8$ & $\cellcolor{green!10}{+10.4}$		&\hspace{-0.3cm} & $24.0$ & $24.0$ & $\cellcolor{green!10}{+0.0}$\\
$t_4$ & $55.2$ & $\mathbf{67.6}$ & $\cellcolor{green!10}{+12.4}$&\hspace{-0.3cm} & $25.3$ & $27.4$ & $\cellcolor{green!10}{+2.1}$\\
$t_5$ & $63.7$ & $66.2$ & $\cellcolor{green!10}{+2.5}$		&\hspace{-0.3cm} & $21.6$ & $25.2$ & $\cellcolor{green!10}{+3.6}$\\
$t_6$ & $51.0$ & $52.7$ & $\cellcolor{green!10}{+1.7}$		&\hspace{-0.3cm} & $20.7$ & $23.7$ & $\cellcolor{green!10}{+3.0}$\\
$t_7$ & $48.4$ & $51.7$ & $\cellcolor{green!10}{+3.3}$		&\hspace{-0.3cm} & $22.5$ & $27.5$ & $\cellcolor{green!10}{+5.0}$\\
$t_8$ & $51.1$ & $56.5$ & $\cellcolor{green!10}{+5.4}$		&\hspace{-0.3cm} & $\mathbf{30.2}$ & $\mathbf{33.2}$ & $\cellcolor{green!10}{+3.0}$\\
$t_9$ & $50.4$ & $53.2$ & $\cellcolor{green!10}{+2.8}$		&\hspace{-0.3cm} & $15.8$ & $25.3$ & $\cellcolor{green!10}{+9.5}$\\
$t_{10}$ & $39.8$ & $46.7$ & $\cellcolor{green!10}{+6.9}$		&\hspace{-0.3cm} & $3.6$  & $19.0$ & $\cellcolor{green!10}{+15.4}$\\
\hhline{-|---~|---}
Top-$3$ \cite{Pickup2014} & \multicolumn{3}{c|}{$70.6$ -- $69.1$ -- $65.9$} &\hspace{-0.3cm} & \multicolumn{3}{c|}{$38.7$ -- $35.6$ -- $35.4$  }\\
\hhline{-|---~|---}
\end{tabular}
\end{center}
\caption{\label{table:shrec14_retrieval_em}\textbf{EM} retrieval performance. \textit{Left:} \textsc{SHREC 2014}
(synthetic); \textit{Right:} \textsc{SHREC 2014} (real).}
\begin{center}
\begin{tabular}{|c|cc||r| c |cc||r|}
\cline{1-4}\cline{6-8}
HKS $t_i$ & \multicolumn{1}{c|}{$d_{k^L}$} & \multicolumn{1}{c||}{$d_{k_\sigma}$} & \multicolumn{1}{c|}{$\Delta$} &\hspace{-0.3cm} & \multicolumn{1}{c|}{$d_{k^L}$} & \multicolumn{1}{c||}{$d_{k_\sigma}$} & 
\multicolumn{1}{c|}{$\Delta$} \\
\hhline{-|---~|---}
$t_1$ & $81.3$ & $91.5$ & $\cellcolor{green!10}{+10.2}$			&\hspace{-0.3cm} & $53.0$ & $49.6$ & $\cellcolor{red!10}{-3.4}$\\
$t_2$ & $\mathbf{92.1}$ & $93.4$ & $\cellcolor{green!10}{+1.3}$	&\hspace{-0.3cm} & $51.1$ & $51.3$ & $\cellcolor{green!10}{+0.2}$\\
$t_3$ & $80.3$ & $89.3$ & $\cellcolor{green!10}{+9.0}$			&\hspace{-0.3cm} & $47.7$ & $48.4$ & $\cellcolor{green!10}{+0.7}$\\
$t_4$ & $85.0$ & $\mathbf{93.8}$ & $\cellcolor{green!10}{+8.8}$	&\hspace{-0.3cm} & $52.7$ & $55.5$ & $\cellcolor{green!10}{+2.8}$\\
$t_5$ & $89.0$ & $93.2$ & $\cellcolor{green!10}{+4.2}$			&\hspace{-0.3cm} & $51.2$ & $55.5$ & $\cellcolor{green!10}{+4.3}$\\
$t_6$ & $78.6$ & $82.5$ & $\cellcolor{green!10}{+3.9}$			&\hspace{-0.3cm} & $48.1$ & $54.2$ & $\cellcolor{green!10}{+6.1}$\\
$t_7$ & $77.2$ & $81.6$ & $\cellcolor{green!10}{+4.4}$			&\hspace{-0.3cm} & $55.7$ & $60.5$ & $\cellcolor{green!10}{+4.8}$\\
$t_8$ & $80.4$ & $86.3$ & $\cellcolor{green!10}{+5.9}$			&\hspace{-0.3cm} & $\mathbf{72.8}$ & $\mathbf{68.3}$ & $\cellcolor{red!10}{-4.5}$\\
$t_9$ & $79.7$ & $83.9$ & $\cellcolor{green!10}{+4.2}$			&\hspace{-0.3cm} & $50.4$ & $61.0$ & $\cellcolor{green!10}{+10.6}$\\
$t_{10}$ & $70.8$ & $78.9$ & $\cellcolor{green!10}{+8.1}$			&\hspace{-0.3cm} & $27.7$ & $51.3$ & $\cellcolor{green!10}{+23.6}$\\
\hhline{-|---~|---}
Top-$3$ \cite{Pickup2014} & \multicolumn{3}{c|}{$97.7$ -- $93.8$ -- $92.7$} &\hspace{-0.3cm} & \multicolumn{3}{c|}{$78.1$ -- $71.7$ -- $71.2$  }\\
\hhline{-|---~|---}
\end{tabular}
\end{center}
\caption{\label{table:shrec14_retrieval_dcg}\textbf{DCG} retrieval performance. \textit{Left:} \textsc{SHREC 2014}
(synthetic); \textit{Right:} \textsc{SHREC 2014} (real).}
\end{table}

\end{document}